  \newlength{\defbaselineskip}
\newcites{sec}{Secondary Literature}
\def\compactify{\itemsep=0pt \topsep=0pt \partopsep=0pt \parsep=0pt}
\theoremstyle{definition}
\newtheorem{definition}{Definition}
\theoremstyle{plain}
\newtheorem{proposition}{Proposition}
\newcommand{\N}{\mathbb{N}}
\newcommand{\Abs}[1]{\left| #1 \right| }
\newcommand{\Prob}[2][]{\underset{#1}{\mathbf{P}}\left( \hiderel{#2} \right) }
\newcommand{\Probc}[2]{\mathbf{P}\left( #1 \middle | \hiderel{#2} \right) }
\newcommand{\Exv}[1]{\mathbf{E}\left[ #1 \right]}
\newcommand{\Exvc}[2]{\mathbf{E}\left[ #1 \middle | \hiderel{#2} \right]}
\newcommand{\F}{\mathcal{F}}
\newcommand{\tvdist}[1]{\left\| #1 \right\|_{\mathrm{TV}}}
\newcommand{\svdist}[2]{\left\| #1 \right\|_{\mathrm{SV}(#2)}}
\newcommand{\hogwild}{\textsc{Hogwild!}\xspace}
\newcommand{\crcchange}[1]{#1}
  \newlength{\myfcwidth}
  \newlength{\myfcwidth}
\begin{document}

\iftoggle{arxiv}{
  
\title{Ensuring Rapid Mixing and Low Bias for Asynchronous Gibbs Sampling}

\author{
Christopher De Sa,
Kunle Olukotun, and
Christopher R{\'e} \\
Stanford University, \\
\texttt{\{cdesa,kunle,chrismre\}@stanford.edu}
}

\maketitle

}{

\twocolumn[
\icmltitle{Ensuring Rapid Mixing and Low Bias for Asynchronous Gibbs Sampling}

\icmlauthor{Christopher De Sa}{cdesa@stanford.edu}
\icmladdress{Department of Electrical Engineering,
  Stanford University, Stanford, CA 94309}
\icmlauthor{Kunle Olukotun}{kunle@stanford.edu}
\icmladdress{Department of Electrical Engineering,
  Stanford University, Stanford, CA 94309}
\icmlauthor{Christopher R{\'e}}{chrismre@stanford.edu}
\icmladdress{Department of Computer Science,
  Stanford University, Stanford, CA 94309}

\icmlkeywords{hogwild, gibbs sampling}

\vskip 0.3in
]

}


\begin{abstract} 
Gibbs sampling is a Markov chain Monte Carlo technique commonly used for
estimating marginal distributions.
To speed up Gibbs sampling, there has recently been interest in parallelizing
it by executing asynchronously.  While empirical results suggest
that many models can be efficiently sampled asynchronously, traditional
Markov chain analysis does not apply to the asynchronous case, and thus
asynchronous Gibbs sampling is poorly understood.
In this paper, we derive a better understanding of the two main challenges
of asynchronous Gibbs: bias and mixing time. 
We show experimentally that our theoretical results match practical outcomes.
\end{abstract}

\section{Introduction}

Gibbs sampling is one of the most common Markov chain Monte Carlo methods used
with graphical models~\cite{koller2009probabilistic}.  
In this setting, Gibbs sampling (Algorithm
\ref{algGibbsSampling}) operates iteratively by choosing at random a variable
from the model at each timestep, and updating it by sampling from its
conditional distribution given the other variables in the model.
Often, it 
is applied to inference problems, in which we are trying to estimate the
marginal probabilities of some query events in a given distribution.  

\begin{algorithm}[h]
  \caption{Gibbs sampling}
  \label{algGibbsSampling}
  \begin{algorithmic}
    \REQUIRE Variables $x_i$ for $1 \le i \le n$, and distribution $\pi$.
    \FOR{$t = 1$ \textbf{to} $T$}
      \STATE Sample $s$ uniformly from $\{1, \ldots, n \}$.
      \STATE Re-sample $x_s$ uniformly from $\mathbf{P}_{\pi}(X_s | X_{\{1,\ldots,n\} \setminus \{ s \}})$.
    \ENDFOR
  \end{algorithmic}
\end{algorithm}

For sparse graphical models, to which
Gibbs sampling is often applied, each of these updates needs to read the values
of only a small subset of the variables; therefore each update can be computed
very quickly on modern hardware.  Because of this and other useful properties
of Gibbs sampling, many systems use Gibbs sampling to perform inference on
big data \cite{newman2007distributed,lunn2009bugs,
mccallum2009factorie,smola2010architecture,NIPS2012_4832,zhang2014dimmwitted}.

Since Gibbs sampling is such a ubiquitous algorithm, it is important to
try to optimize its execution speed on modern hardware.
Unfortunately, while modern computer hardware has been trending towards more
parallel architectures~\cite{sutter2005lunch}, traditional Gibbs sampling is an
inherently sequential
algorithm; that is, the loop in Algorithm \ref{algGibbsSampling} is not
\crcchange{directly parallelizable}.
Furthermore, for sparse models, very little work
happens within each iteration, meaning it is difficult to extract much
parallelism from the body of this loop.  Since traditional Gibbs sampling
parallelizes so poorly, it is interesting to study variants of
Gibbs sampling that can be parallelized.  Several such variants have been
proposed, including applications to latent Dirichlet allocation
\cite{newman2007distributed,smola2010architecture} and distributed
constraint optimization problems
\cite{nguyen2013distributed}.

In one popular variant, multiple threads run the Gibbs sampling update rule
in parallel without locks, a strategy called \emph{asynchronous}
or \hogwild execution---in this paper, we use these two terms interchangeably.
This idea was proposed, but not analyzed theoretically,
in \citet{smola2010architecture},
and has been shown to give empirically better
results on many models~\cite{zhang2014dimmwitted}.  But when can
we be sure that \hogwild
Gibbs sampling will produce accurate results?  Except for the case of
Gaussian random
variables~\cite{johnson2013analyzing}, there is no existing analysis by
which we can ensure that asynchronous Gibbs sampling will be appropriate for a
particular application.  Even the problems posed by \hogwild-Gibbs are poorly
understood, and their solutions more so.


As we will \crcchange{show} in the following sections, there are two main issues
when analyzing asynchronous Gibbs sampling.  Firstly, we will show by example
that, surprisingly, \hogwild-Gibbs can be \emph{biased}---unlike sequential
Gibbs, it does not always produce samples that are arbitrarily close to the
target distribution.  Secondly, we will show that the
\emph{mixing time} (the time for the chain to become close to its stationary
distribution) of asynchronous Gibbs sampling can be up to exponentially
greater than that of the corresponding sequential chain.

To address the issue of bias, we need some way to describe the distance between
the target distribution $\pi$ and the distribution of the samples produced
by \hogwild-Gibbs.  The standard notion to use here is the \emph{total
variation distance}, but for the task of computing marginal probabilities,
it gives an overestimate on the error caused by bias.  To better describe the
bias, we introduce a new notion of statistical distance, the
\emph{sparse variation distance}.  While this relaxed notion of
statistical distance is interesting in its own right, its main benefit
here \crcchange{is that it uses a more local view of the chain to more
tightly measure the effect of bias.}

Our main goal is to identify conditions under which the bias and mixing
time of asynchronous Gibbs can be bounded.
One parameter that has been used to great effect in the 
analysis of Gibbs sampling is the \emph{total influence} $\alpha$ of a
model.  The total influence measures the degree to which the marginal
distribution of a variable can depend on the values of the other variables in
the model---this parameter has appeared as part of a celebrated
line of work on \emph{Dobrushin's condition} ($\alpha < 1$), which ensures the
rapid mixing of spin statistics
systems~\cite{dobrushin1956central,dyer06dobrushin,hayes2006simple}.
It turns out that we can use this
parameter to bound both the bias and mixing time of \hogwild-Gibbs,
and so we make the following contributions:
\begin{itemize}
  \compactify
  \item We describe a way to statistically model the asynchronicity in
    \hogwild-Gibbs sampling.
  \item To bound the bias, we prove that for classes of models with
    bounded total influence $\alpha = O(1)$,
    if sequential Gibbs sampling achieves small sparse variation distance to
    $\pi$ in $O(n)$ steps,
    where $n$ is the number of variables,
    then \hogwild-Gibbs samples achieve the same distance in
    at most $O(1)$ more steps.
  \item For models that satisfy Dobrushin's condition (that is,
    $\alpha < 1$), we show that the mixing time bounds
    of sequential and \hogwild-Gibbs sampling differ only by a factor
    of $1 + O(n^{-1})$.
  \item We validate our results experimentally and show that, by using
    asynchronous execution, we can achieve wall-clock speedups of up to
    $2.8 \times$ on real problems.
\end{itemize}

\section{Related Work}

Much work has been done on the analysis of parallel Gibbs samplers.  One
simple way to parallelize Gibbs sampling is to run multiple chains independently
in parallel: this heuristic uses parallelism to produce more samples overall,
but does not produce accurate samples more quickly.  Additionally, this strategy
is sometimes worse than other strategies on a systems
level~\cite{smola2010architecture,zhang2014dimmwitted}, typically because it
requires additional memory
to maintain multiple models of the chain.
Another strategy for parallelizing Gibbs sampling involves taking advantage
of the structure of the underlying factor graph to run in parallel while still
maintaining an execution pattern to which the standard sequential Gibbs sampling
analysis can be applied~\cite{gonzalez2011parallel}.  Much further work has
focused on parallelizing sampling for specific problems, such as
LDA~\cite{newman2007distributed,smola2010architecture} and
others~\cite{nguyen2013distributed}.

Our approach follows on the paper of \citet{johnson2013analyzing}, which
\crcchange{named}
the \hogwild-Gibbs sampling algorithm and analyzed it for Gaussian
models.  Their main contribution is an analysis framework that
includes a sufficient condition under which
\hogwild Gaussian Gibbs samples are guaranteed to have the correct asymptotic
mean.  Recent work~\cite{terenin2015asynchronous} has analyzed a similar
algorithm under even stronger regularity conditions.  Here, we seek to give
more general results for the analysis of \hogwild-Gibbs sampling on
discrete-valued factor graphs.

The \hogwild-Gibbs sampling algorithm was inspired by a line of work on 
parallelizing stochastic gradient descent (SGD) by running it asynchronously.
\hogwild SGD was first proposed by \citet{recht2011hogwild}, who proved that
while running without locks causes race conditions, they do not significantly
impede the
convergence of the algorithm. The asynchronous execution strategy has been
applied to many problems---such as PageRank approximations~\cite{FrogWild},
deep learning~\cite{DogWild}
and recommender systems~\cite{yu2012scalable}---so it is not surprising that it
has been proposed for use with Gibbs sampling.  Our goal in this paper is to
combine
analysis ideas that have been applied to Gibbs sampling and \hogwild, in
order to characterize the behavior of asynchronous Gibbs.
In particular, we are motivated by some recent work on the analysis of
\hogwild for
SGD \cite{liu2013asynchronous,desa2015tamingthewild,mania2015perturbed,liu2015asynchronous}.
Several of these results suggest modeling the race conditions
inherent
in \hogwild SGD as noise in a stochastic process; this lets them bring a
trove of statistical techniques to bear on the analysis of \hogwild SGD.  
Therefore, in this paper, we will apply a similar stochastic process model to
Gibbs sampling.

Several recent papers have focused on the mixing time of Gibbs sampling
\crcchange{based on the structural properties of the model}.
\citet{gotovos2015submodular} and \citet{desa2015gibbs} each
show that Gibbs sampling mixes in polynomial time for a class of
distributions bounded by some parameter.  Unfortunately, these results both
depend on \emph{spectral methods} (that try to bound the spectral gap of the
Markov transition matrix), which are difficult to apply to \hogwild Gibbs
sampling for two reasons.  First, spectral methods don't let us represent
the sampler as a stochastic process, which limits the range of techniques we
can use to model the noise.  Secondly, while most spectral methods only apply
to \emph{reversible} Markov chains---and sequential Gibbs sampling is always
a reversible chain---for \hogwild-Gibbs sampling the asynchronicity and
parallelism make the chain non-reversible.  Because of this, we were unable to
use these spectral results in our asynchronous setting. We are forced to rely on
the other method~\cite{guruswami2000rapidly} for analyzing Markov processes,
\emph{coupling}---the type of analysis used with the Dobrushin
condition---which we will describe in the following sections.


\section{Modeling Asynchronicity}
\label{ssModelingHogwild}

In this section, we describe a statistical model for asynchronous Gibbs
sampling by adapting the hardware model outlined in
\citet{desa2015tamingthewild}.
Because we are motivated by the factor graph inference problem, we
will focus on the case where the distribution $\pi$ that we want to sample
comes from a sparse,
discrete graphical model.

Any \hogwild-Gibbs implementation involves some number of threads each
repeatedly executing the Gibbs update rule on a single copy of the model
(typically stored in RAM).  We assume
that this model serializes all writes, such that we can speak of the state
of the system after $t$ writes have occurred.  We call this time $t$, and we
will model the \hogwild system as a stochastic process adapted to the natural
filtration $\F_t$.  \crcchange{Here,} 
$\F_t$ contains all events that have occurred up to
time $t$, and we say an event is \emph{$\F_t$ measurable} if it is known
deterministically by time $t$.

We begin our construction by letting
$x_{i,t}$ denote the ($\F_t$ measurable) value of variable $i$ at time $t$,
and letting $\tilde I_t$ be the ($\F_{t+1}$ measurable) index of the variable
that we choose to sample at time $t$.  For Gibbs sampling, we have
\[
  \forall i \in \{1, \ldots, n \}, \;
  \Probc{\tilde I_t = i}{\F_t} = \frac{1}{n};
\]
this represents the fact that we have an equal probability of sampling each
variable.

Now that we have defined which variables are to be sampled, we
proceed to describe how they are sampled.  For \hogwild-Gibbs sampling, we must
model the fact that the sampler does not get to use exactly the values of
$x_{i, t}$; rather
it has access to a cache containing potentially \emph{stale} values.  To do
this, we define ($\F_{t+1}$ measurable)
$\tilde v_{i,t} = x_{i,t-\tilde \tau_{i,t}}$,
where $\tilde \tau_{i,t} \ge 0$ is a  \emph{delay
parameter} ($\F_{t+1}$ measurable and independent of
$\tilde I_t$) that represents how old the currently-cached value for variable $i$
could be.  A variable resampled using this stale data would have distribution
\[
  \Probc{\tilde z_{i,t} = z}{\F_t}
  \propto
  \pi(\tilde v_{1,t}, \ldots, \tilde v_{i-1,t}, z,
    \tilde v_{i+1,t}, \ldots, \tilde v_{n,t}).
\]
Using this, we can relate
the values of the variables across time with
\[
  x_{i,t+1}
  = 
  \left\{
    \begin{array}{l l}
      \tilde z_{i,t} & \text{ if } i = \tilde I_t \\
      x_{i,t} & \text{ otherwise.}
    \end{array}
  \right.
\]

So far, our model is incompletely specified, because we have not described
the distribution of the delays $\tilde \tau_{i,t}$. Unfortunately, since these
delays \crcchange{depend on the number of threads and the specifics of the
hardware}~\cite{recht2011hogwild}, their distribution is
difficult to measure.
Instead of specifying a particular distribution, we require
only a bound on the expected delay,
$\Exvc{\tilde \tau_{i, t}}{\F_t} \le \tau$.
In this model, the
$\tau$ parameter represents everything that is relevant about the hardware;
representing the hardware in this way
has been successful for the analysis of asynchronous
SGD~\cite{recht2011hogwild}, so it is reasonable to
use it for Gibbs sampling. In addition to this, we will need a similar
parameter that 
bounds the tails of $\tilde \tau_{i,t}$ slightly more aggressively.  We require
that for some parameter $\tau^*$, and for all $i$ and $t$,
\[
  \Exvc{\exp\left( n^{-1} \tilde \tau_{i, t} \right) }{\F_t}
  \le
  1 + n^{-1} \tau^*.
\]
This parameter is typically very close to the expected value bound $\tau$; in
particular, as $n$ approaches infinity, $\tau^*$ approaches $\tau$.

\section{The First Challenge: Bias}

Perhaps the most basic result about sequential Gibbs sampling is the fact that,
in the limit of large numbers of samples, it is unbiased.  In order to measure
convergence of Markov chains to their stationary distribution, it is standard
to use the total variation distance.

\begin{definition}[Total Variation Distance]
\label{defnTotalVariationDistance}
The \emph{total variation distance}~\citep[p. 48]{levin2009markov} between two
probability measures $\mu$ and $\nu$ on probability space
$\Omega$ is defined as
\[
  \tvdist{\mu - \nu} = \max_{A \subset \Omega} \Abs{\mu(A) - \nu(A)},
\]
that is, the maximum difference between the probabilities that $\mu$ and
$\nu$ assign to a single event $A$.
\end{definition}

It is a well-known result that, for Gibbs sampling on a strictly-positive
target distribution $\pi$, it will hold that
\begin{equation}
  \label{eqnSequentialUnbiasedLimit}
  \lim_{t \rightarrow \infty}
  \tvdist{P^{(t)} \mu_0 - \pi}
  =
  0,
\end{equation}
where \crcchange{$P^{(t)} \mu_0$ denotes} the distribution of the $t$-th sample.

One of the difficulties that arises when applying \hogwild to Gibbs sampling is
that the race conditions from the asynchronous execution add bias to the
samples --- Equation \ref{eqnSequentialUnbiasedLimit} no longer holds.  To
understand why, we can consider a simple example.

\subsection{Bias Example}

Consider a simple model with two variables $X_1$ and $X_2$ each taking on 
values in $\{0,1\}$, and having distribution
\[
  p(0, 1) = p(1, 0) = p(1, 1) = \frac{1}{3}
  \qquad
  p(0, 0) = 0. 
\]
\iftoggle{arxiv}{
\begin{figure}[h]%
\centering
\resizebox{.65\myfcwidth}{!}{%
\begin{tikzpicture}[every node/.style={inner sep=0,outer sep=0}]
\draw (0,0) node[draw,circle,minimum size=0.8cm] (a00) {\small $(0,0)$};
\draw (0,2) node[draw,circle,minimum size=0.8cm] (a01) {\small $(0,1)$};
\draw (2,0) node[draw,circle,minimum size=0.8cm] (a10) {\small $(1,0)$};
\draw (2,2) node[draw,circle,minimum size=0.8cm] (a11) {\small $(1,1)$};
\path (a11) edge[->,bend right] node [above=0.05cm] {\footnotesize $\nicefrac{1}{4}$} (a01);
\path (a01) edge[->,bend right] node [below=0.05cm] {\footnotesize $\nicefrac{1}{4}$} (a11);
\path (a10) edge[->,bend right] node [right=0.05cm] {\footnotesize $\nicefrac{1}{4}$} (a11);
\path (a11) edge[->,bend right] node [left=0.05cm] {\footnotesize $\nicefrac{1}{4}$} (a10);
\path (a01) edge[->,loop left] node [left=0.05cm] {\footnotesize $\nicefrac{3}{4}$} (a01);
\path (a10) edge[->,loop right] node [right=0.05cm] {\footnotesize $\nicefrac{3}{4}$} (a10);
\path (a11) edge[->,loop right] node [right=0.05cm] {\footnotesize $\nicefrac{1}{2}$} (a11);
\path (a00) edge[->] node [left=0.05cm] {\footnotesize $\nicefrac{1}{2}$} (a01);
\path (a00) edge[->] node [below=0.05cm] {\footnotesize $\nicefrac{1}{2}$} (a10);
\end{tikzpicture}}
\caption{
Transition graph of sequential Gibbs on example model.
}
\label{figSimpleSeqTrans}
\end{figure}
}{}

Sequential Gibbs sampling on this model will produce unbiased samples from the
target distribution.
Unfortunately, this is not the case if we run \hogwild-Gibbs sampling on this
model.  Assume that the state is currently $(1,1)$ and two threads, $T_1$ and
$T_2$, simultaneously update $X_1$ and $X_2$ respectively.  Since $T_1$ reads
state $(1,1)$ it will update $X_1$ to $0$ or $1$ each with probability
$0.5$; the same will be true for $T_2$ and $X_2$.  Therefore, after
this happens, every state will have probability $0.25$; this includes
the state $(0,0)$ which should never occur!  Over time, this race condition
will produce samples with value $(0,0)$ with some non-zero frequency; this
is an example of \emph{bias} introduced by the \hogwild sampling.  Worse,
this bias is not just theoretical: Figure \ref{figHogwildBias} illustrates
how the measured distribution for this model is affected by two-thread
asynchronous execution.
In particular, we observe that almost $5\%$ of the mass is erroneously measured
to be in the state $(0,0)$, which has no mass at all in the true distribution.
The total variation distance to the target distribution is quite large at
$9.8 \%$, and, unlike in the sequential case, this
bias doesn't disappear as the number of samples goes to infinity.

\begin{figure}[t]%
\centering
\resizebox{!}{.60\myfcwidth}{%
\large \input{plotbias.tex}%
}%
\caption{Bias introduced by \hogwild-Gibbs ($10^6$ samples).}%
\label{figHogwildBias}%
\end{figure}


\subsection{Bounding the Bias}
\label{secSparseVariationDistance}

The previous example has shown that asynchronous Gibbs sampling will not
necessarily
produce a sequence of samples arbitrarily close to the target distribution.
Instead, the samples \crcchange{may}
approach some other distribution, which we hope is
sufficiently similar for some practical purpose.  Often, the purpose of
Gibbs sampling is to estimate the marginal distributions of individual variables
or of events that each depend on only a small number of variables in the model.
To characterize the
accuracy of these estimates, the total variation distance is \emph{too
conservative}: it depends on the difference over all the events in the space,
when most of these are events that we do not care about.  To address this, we
introduce the following definition.

\begin{definition}[Sparse Variation Distance]
  \label{defnSparseVariationDistance}
  For any event $A$ in a probability space $\Omega$ over
  a set of variables $V$, let $\Abs{A}$ denote
  the number of variables upon which $A$ depends.
  Then, for any two distributions $\mu$ and $\nu$ over $\Omega$, we define
  the \emph{$\omega$-sparse variation distance} to be
  \[
    \svdist{\mu - \nu}{\omega}
    =
    \max_{\Abs{A} \le \omega}
    \Abs{\mu(A) - \nu(A)}.
  \]
\end{definition}
For the wide variety of applications that use sampling for marginal estimation,
the sparse variation distance measures the quantity
we actually care about: the maximum possible bias in the marginal distribution
of the samples.  As we will show, asynchronous execution seems to have less
effect
on the sparse variation distance than the total variation distance,
\crcchange{because sparse variation distance uses a more localized
view of the chain.}
For example, in Figure \ref{figHogwildBias}, the total variation distance
between the sequential and \hogwild distributions is $9.8 \%$, while
the $1$-sparse variation
distance is only $0.4 \%$. That is, while \hogwild execution does
introduce great bias into the distribution, it still estimates marginals of
the individual variables accurately.

This definition suggests the question: how long do we have to run before our
samples have low sparse variation distance from the target distribution?  To
answer this question, we introduce the following definition.
\begin{definition}[Sparse Estimation Time]
  The \emph{$\omega$-sparse estimation time} of a stochastic sampler
  with distribution $P^{(t)} \mu_0$ at time $t$ and target distribution
  $\pi$ is the
  first time $t$ at which, for any initial distribution $\mu_0$, the estimated
  distribution is within sparse variation distance
  $\epsilon$ of $\pi$,
  \[
    t_{\mathrm{SE}(\omega)}(\epsilon)
    =
    \min \{
      t \in \N
    \mid
      \forall \mu_0, \,
      \| P^{(t)} \mu_0 - \pi \|_{\mathrm{SV}(\omega)}
      \le
      \epsilon
    \}.
  \]
\end{definition}


In many practical systems~\cite{neubig14pgibbs,wu2015incremental}, Gibbs
sampling is used
without a proof that it works; instead, it is naively run for some fixed number
of passes through the dataset.  This naive strategy works for models for which
accurate marginal estimates can be achieved after $O(n)$ samples.
This $O(n)$ runtime is
necessary for Gibbs sampling to be feasible on big data, meaning
roughly that these are the models which it is interesting to try to
speed up using asynchronous execution.  Therefore, for the rest of this
section, we will focus on the bias of the \hogwild chain for this class of
models.
When analyzing Gibbs sampling, we can bound the bias
within the context of a coupling argument using a parameter called the
\emph{total influence}.  While we arrived at this condition independently,
it has been studied before, especially in the context of \emph{Dobrushin's
condition}, which ensures rapid mixing of Gibbs sampling. 

\begin{definition}[Total Influence]
  \label{defnLocalInfluence}
  Let $\pi$ be a probability distribution over some set of variables $I$.
  Let $B_j$ be the
  set of state pairs $(X, Y)$ which differ only at variable $j$.  Let
  $\pi_i(\cdot | X_{I \setminus \{i\}})$ denote the conditional distribution
  in $\pi$ of variable
  $i$ given all the other variables in state $X$.  Then, define $\alpha$, the
  total influence of $\pi$, as
  \[
    \alpha
    =
    \max_{i \in I}
    \sum_{j \in I}
    \max_{(X, Y) \in B_j}
    \tvdist{
      \pi_i(\cdot | X_{I \setminus \{i\}})
      -
      \pi_i(\cdot | Y_{I \setminus \{i\}})
    }.
  \]
  We say the model satisfies Dobrushin's condition if $\alpha < 1$.
\end{definition}

One way to think of total influence for factor graphs is as a generalization of
maximum degree; indeed, if a factor graph has maximum degree $\Delta$, it can
easily be shown that $\alpha \le \Delta$.
It turns out that if we can bound both this parameter and
the sparse estimation time
of sequential Gibbs sampling, we can give a simple bound on the sparse
estimation time for asynchronous Gibbs sampling.

\begin{restatable}{rclaim}{claimHogwildGeneralBiasBigO}
  \label{claimHogwildGeneralBiasBigO}
  Assume that we have a class of distributions with bounded total influence
  $\alpha = O(1)$.  For each distribution $\pi$ in the class, let
  $\bar t_{\mathrm{SE-seq}(\omega)}(\pi, \epsilon)$ be an upper bound on the
  $\omega$-sparse estimation time of its sequential Gibbs sampler, and
  assume that it is a convex, decreasing function of $\epsilon$.
  Further assume that, for any $\epsilon$, across all models,
  \[
    \bar t_{\mathrm{SE-seq}(\omega)}(\pi, \epsilon) = O(n),
  \]
  where $n$ is the number of variables in the model.
  Then, for any $\epsilon$, the sparse estimation time of \hogwild-Gibbs 
  across all models is bounded by
  \[
    t_{\mathrm{SE-hog}(\omega)}(\pi, \epsilon)
    \le
    \bar t_{\mathrm{SE-seq}(\omega)}(\pi, \epsilon)
    +
    O(1).
  \]
\end{restatable}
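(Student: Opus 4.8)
The plan is to control the bias by coupling the \hogwild chain against an ordinary sequential chain and then routing the resulting estimate through the sequential convergence guarantee. For an arbitrary common start $\mu_0$, I would run the \hogwild chain $X_t$ and a sequential Gibbs chain $Y_t$ together, driving both with the same selected index $\tilde I_t$ and with a maximal coupling of the two re-sampling conditionals at each step. Writing $p_{i,t} = \mathbf{P}(x_{i,t} \neq y_{i,t})$ for the per-variable disagreement probability, the entire argument hinges on showing that every $p_{i,t}$ stays $O(1/n)$ over the horizon $t = O(n)$ in which the sequential chain is known to converge.

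The first step is a one-step recursion for the vector $p_t = (p_{i,t})_i$. Since variable $i$ changes only when it is re-sampled (probability $1/n$), and at that moment the two chains draw from $\pi_i(\cdot \mid \tilde v_{\cdot,t})$ (stale reads) versus $\pi_i(\cdot \mid Y_t)$ (fresh reads), the maximal coupling makes the fresh disagreement probability equal to the total variation distance between these conditionals. A telescoping/hybrid argument across the coordinates at which the two read-vectors differ, together with Definition~\ref{defnLocalInfluence}, bounds this distance by $\sum_j R_{ij}\,\mathbf{P}(\tilde v_{j,t} \neq y_{j,t})$, where $R_{ij}$ is the influence of $j$ on $i$ and $\norm{R}_\infty \le \alpha$. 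A stale read differs from the fresh value either because the chains already disagree at $j$, or because staleness flipped $j$ during the delay window, the latter with probability at most $\Exv{\tilde\tau_{j,t}}/n \le \tau/n$. Assembling these gives a linear recursion of the form $p_{t+1} \le (I - \tfrac1n(I - R))\,p_t + \tfrac{\alpha \tau}{n^2}\mathbf{1}$ (entrywise). Taking $\norm{\cdot}_\infty$ and using $\norm{R}_\infty \le \alpha$ yields $\norm{p_{t+1}}_\infty \le (1 + \tfrac{\alpha-1}{n})\norm{p_t}_\infty + \tfrac{\alpha\tau}{n^2}$. The essential point is that here we assume only $\alpha = O(1)$, not Dobrushin's condition, so the per-step factor may exceed $1$; but over $t = O(n)$ steps it compounds to only $(1 + \tfrac{\alpha-1}{n})^{O(n)} = e^{O(\alpha)} = O(1)$, and summing the source term leaves $\norm{p_t}_\infty = O(\tau/n) = O(1/n)$.

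Consequently, for any event $A$ depending on a set $S$ of at most $\omega$ variables, the coupling gives $\Abs{\mathbf{P}(X_t \in A) - \mathbf{P}(Y_t \in A)} \le \sum_{i \in S} p_{i,t} = O(\omega/n)$, i.e. $\svdist{P^{(t)}_{\mathrm{hog}}\mu_0 - P^{(t)}_{\mathrm{seq}}\mu_0}{\omega} = O(\omega/n)$. A triangle inequality with the sequential guarantee then yields $\svdist{P^{(t)}_{\mathrm{hog}}\mu_0 - \pi}{\omega} \le \epsilon_{\mathrm{seq}}(t) + O(\omega/n)$, where $\epsilon_{\mathrm{seq}}(t)$ is the sequential sparse distance at time $t$. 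I would then turn this additive $O(\omega/n)$ bias into the claimed $O(1)$ extra steps: the \hogwild chain is within $\epsilon$ as soon as the sequential chain is within $\epsilon - O(\omega/n)$, so $t_{\mathrm{SE-hog}(\omega)}(\pi,\epsilon) \le \bar t_{\mathrm{SE-seq}(\omega)}(\pi,\, \epsilon - O(\omega/n))$. Since $\bar t_{\mathrm{SE-seq}(\omega)}(\pi,\cdot)$ is convex, decreasing, and $O(n)$ at every accuracy, convexity bounds its slope near the fixed target $\epsilon$ by a difference quotient of two $O(n)$ values over a constant accuracy gap, hence by $O(n)$; multiplying by the $O(\omega/n)$ shortfall gives an inflation of only $O(\omega) = O(1)$, which is the stated bound.

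I expect the recursion to be the crux. The delicate part is not the bookkeeping but the compounding of staleness: an error injected $\tilde\tau$ steps in the past is amplified by roughly $(1 + \tfrac{\alpha-1}{n})^{\tilde\tau}$ before it is read, so a naive use of $\Exv{\tilde\tau_{j,t}} \le \tau$ is not quite enough, and one must instead invoke the sharper tail bound $\Exvc{\exp(n^{-1}\tilde\tau_{i,t})}{\F_t} \le 1 + n^{-1}\tau^*$ to keep the expected amplification $O(1)$. Making the per-variable estimate stay $O(1/n)$ without the contraction that $\alpha < 1$ would have provided is the main obstacle.
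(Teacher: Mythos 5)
Your proposal is correct and takes essentially the same route as the paper's proof: the same coupling of the \hogwild chain against a sequential chain driven by shared randomness, the same one-step recursion $\max_i \Prob{X_{i,t+1} \ne Y_{i,t+1}} \le \left(1 - \frac{1-\alpha}{n}\right)\max_i \Prob{X_{i,t} \ne Y_{i,t}} + \frac{\alpha\tau}{n^2}$ (the paper's Lemma \ref{lemmaHogwildLocalBias}), the same accumulation and union bound giving an $O(\omega \tau t / n^2)\, e^{(\alpha-1)_+ t/n}$ sparse-variation bias (Lemmas \ref{lemmaHogwildBiasRFive} and \ref{lemmaSparseCoupling}), and the same triangle-inequality-plus-convexity conversion of that $O(\omega/n)$ bias into $O(1)$ extra steps (Theorem \ref{thmHogwildGeneralBias}, with your slope bound on $\bar t_{\mathrm{SE-seq}}$ being the dual of the paper's interpolation of $\bar t_{\mathrm{SE-seq}}^{-1}$). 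One correction to your closing remark: the tail parameter $\tau^*$ is \emph{not} needed here, and your own recursion shows why---you compare the stale read against the \emph{current} disagreement plus a resampling probability of at most $\Exv{\tilde\tau_{j,t}}/n \le \tau/n$, so past errors are never amplified through the delay; $\tau^*$ is required only in the mixing-time analysis (Lemma \ref{lemmaHogwildLocalPB}), where two delayed chains are coupled and the decaying disagreement makes stale values genuinely larger than current ones.
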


Roughly, this means that \hogwild-Gibbs sampling ``works'' on all problems 
for which we know marginal estimation is ``fast'' and the total influence is
bounded.
Since the sparse estimation times here are measured in iterations, and the
asynchronous sampler is able, due to parallelism, to run many more iterations
in the same amount of wall clock time, this result implies that \hogwild-Gibbs
can be much faster than sequential Gibbs for producing estimates of
similar quality.  
\crcchange{To prove Claim \ref{claimHogwildGeneralBiasBigO},
and more explicitly bound the bias, we use
the following lemma.}

\begin{restatable}{rlemma}{lemmaHogwildBiasRFive}
  \label{lemmaHogwildBiasRFive}
  \crcchange{
  Assume that we run \hogwild-Gibbs sampling on a distribution $\pi$ with
  total influence $\alpha$.  Let $P^{(t)}_{\mathrm{hog}}$
  denote the transition matrix of \hogwild-Gibbs and $P^{(t)}_{\mathrm{seq}}$
  denote the transition matrix of sequential Gibbs.  Then for any initial
  distribution $\mu_0$ and for any $t$,
  \begin{dmath*}
    \svdist{P^{(t)}_{\mathrm{hog}} \mu_0 - P^{(t)}_{\mathrm{seq}} \mu_0}{\omega}
    \le
    \frac{\omega \alpha \tau t}{n^2}
    \exp\left( \frac{(\alpha - 1)_+}{n} t \right),
  \end{dmath*}
  where $(x)_+$ denotes $x$ if $x > 0$ and $0$ otherwise.}
\end{restatable}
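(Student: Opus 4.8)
The plan is to prove this by a coupling argument that controls, variable by variable, the probability that the \hogwild and sequential chains disagree, and then to convert that per-variable control into a sparse-variation bound via a union bound over the (at most $\omega$) variables an event can depend on. First I would construct a coupling of the two chains, both initialized at $\mu_0$: at each step both chains select the \emph{same} index $\tilde I_t$ (its conditional distribution given $\F_t$ is uniform and independent of the state in both chains), and when that variable is resampled I would use the maximal coupling of the two conditional distributions, so that the resampled values agree except with probability equal to the total variation distance between those conditionals. The crucial reduction is that for any event $A$ depending on a set $S$ with $\Abs{A} \le \omega$, the coupled chains can assign $A$ different probabilities only on the event that they disagree on some coordinate in $S$; hence
\[
  \svdist{P^{(t)}_{\mathrm{hog}} \mu_0 - P^{(t)}_{\mathrm{seq}} \mu_0}{\omega}
  \le
  \omega \max_i \phi_{i,t},
  \qquad
  \phi_{i,t} := \Prob{x^{\mathrm{seq}}_{i,t} \ne x^{\mathrm{hog}}_{i,t}}.
\]

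Next I would derive a recursion for $\phi_{i,t}$. With probability $\tfrac{n-1}{n}$ the index $i$ is untouched, which (using independence of $\tilde I_t$ from the state) contributes $\tfrac{n-1}{n}\phi_{i,t}$; with probability $\tfrac1n$ variable $i$ is resampled, and then the disagreement probability is at most the total variation distance between $\pi_i(\cdot \mid X^{\mathrm{seq}}_{I\setminus\{i\}})$ and $\pi_i(\cdot \mid \tilde V_{I\setminus\{i\}})$. Writing $I_{ij} = \max_{(X,Y)\in B_j}\tvdist{\pi_i(\cdot \mid X_{I\setminus\{i\}}) - \pi_i(\cdot \mid Y_{I\setminus\{i\}})}$, so that $\sum_j I_{ij} \le \alpha$, a telescoping path argument (changing the conditioning one coordinate at a time) bounds this TV distance by $\sum_j I_{ij}\,\mathbf{1}[\text{conditioning differs at }j]$. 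The conditioning differs at $j$ only if the chains genuinely disagree there, or the \hogwild value is stale; separating these by a union bound and using $\Exvc{\tilde\tau_{j,t}}{\F_t} \le \tau$ (so $j$ is resampled in the staleness window with probability at most $\tau/n$) gives $\Prob{X^{\mathrm{seq}}_{j,t}\ne\tilde v_{j,t}} \le \phi_{j,t} + \tau/n$. Combining,
\[
  \phi_{i,t+1}
  \le
  \frac{n-1}{n}\,\phi_{i,t}
  + \frac{1}{n}\sum_{j} I_{ij}\left(\phi_{j,t} + \frac{\tau}{n}\right).
\]

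Taking $\norm{\cdot}_\infty$ and using that the induced matrix norm satisfies $\norm{I}_\infty = \max_i \sum_j I_{ij} \le \alpha$ collapses this to the scalar recurrence, with $m_t := \max_i \phi_{i,t}$,
\[
  m_{t+1}
  \le
  \left(1 + \frac{\alpha-1}{n}\right) m_t + \frac{\alpha\tau}{n^2},
  \qquad m_0 = 0.
\]
Unrolling gives $m_t \le \tfrac{\alpha\tau}{n^2}\sum_{k=0}^{t-1}\bigl(1+\tfrac{\alpha-1}{n}\bigr)^k$. When $\alpha \le 1$ each summand is at most $1$, so $m_t \le \tfrac{\alpha\tau t}{n^2}$; when $\alpha > 1$, bounding the $t$ summands by the largest and using $(1+x)^t \le e^{xt}$ yields $m_t \le \tfrac{\alpha\tau t}{n^2}\exp\!\bigl(\tfrac{\alpha-1}{n}t\bigr)$. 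Both cases combine into the factor $\exp\!\bigl(\tfrac{(\alpha-1)_+}{n}t\bigr)$, and multiplying by $\omega$ via the reduction above gives the stated bound.

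The step I expect to be the main obstacle is the influence-based control of the TV distance between the two conditionals: it requires the telescoping argument over single-coordinate changes to legitimately invoke the definition of $\alpha$, and it mixes two distinct sources of discrepancy between $X^{\mathrm{seq}}_{I\setminus\{i\}}$ and the stale values $\tilde V_{I\setminus\{i\}}$ — genuine chain disagreement and staleness — which must be cleanly separated before the expected-delay bound $\tau/n$ can be applied. The secondary care-point is invoking the independence hypotheses correctly (delays independent of $\tilde I_t$, and selection independent of the current state) so that the conditioning in the recursion is valid.
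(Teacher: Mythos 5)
Your proposal is correct and follows essentially the same route as the paper: the paper modularizes the argument into auxiliary lemmas (a sparse-coupling reduction giving the factor $\omega$, an influence lemma proved by exactly your telescoping single-coordinate argument, and a coupling recursion whose key step is the same staleness decomposition $\Prob{X^{\mathrm{seq}}_{j,t}\ne\tilde v_{j,t}}\le\phi_{j,t}+\tau/n$), but the coupling construction, the recursion $m_{t+1}\le(1+\tfrac{\alpha-1}{n})m_t+\tfrac{\alpha\tau}{n^2}$ with $m_0=0$, and the unrolling into the $(\alpha-1)_+$ exponential factor are all identical to the paper's. Your per-variable influence matrix $I_{ij}$ with $\norm{I}_\infty\le\alpha$ is just a slightly more granular bookkeeping of what the paper's influence lemma states directly.
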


\crcchange{This lemma bounds the distance between the distributions of
asynchronous and
sequential Gibbs; if we let $t$ be the sparse estimation time of sequential
Gibbs, we can interpret this distance as an upper bound on the bias. 
When $t = O(n)$, this bias is $O(n^{-1})$,
which has an intuitive explanation: for \hogwild execution, race conditions
occur about once every $\Theta(n)$ iterations, so the bias is roughly
proportional to the frequency of race conditions.  This gives us a
relationship between the statistical error of the algorithm and a more
traditional notion of computational error.}

Up until now, we have been assuming that we have a class for which the
sparse estimation time is $O(n)$. Using the total influence $\alpha$, we can
identify a class of models known to meet this criterion.

\begin{restatable}{rtheorem}{thmSequentialLocalBias}
  \label{thmSequentialLocalBias}
  For any distribution that 
  satisfies Dobrushin's condition, $\alpha < 1$, the $\omega$-sparse
  estimation time of the sequential Gibbs sampling process will be bounded by
  \[
    t_{\mathrm{SE-seq}(\omega)}(\epsilon)
    \le
    \left\lceil
      \frac{n}{1 - \alpha} \log\left( \frac{\omega}{\epsilon} \right)
    \right\rceil.
  \]
\end{restatable}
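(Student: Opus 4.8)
The plan is to prove the bound by a \emph{coordinatewise coupling} argument: I couple the chain started from an arbitrary $\mu_0$ with a chain started from the stationary distribution $\pi$, and track the disagreement probability of each variable \emph{separately} rather than the aggregate Hamming distance. This per-coordinate ($\ell_\infty$) viewpoint is exactly what the row-sum form of $\alpha$ in Definition~\ref{defnLocalInfluence} is designed to control. Concretely, let $X_t$ and $Y_t$ be two copies of sequential Gibbs with $X_0 \sim \mu_0$ and $Y_0 \sim \pi$, so that $X_t \sim P^{(t)}\mu_0$ and, by stationarity, $Y_t \sim \pi$ for all $t$. At each step the two chains pick the same uniformly random index $s$ and resample coordinate $s$ from a maximal coupling of the two conditionals $\pi_s(\cdot\mid X_{\setminus s})$ and $\pi_s(\cdot\mid Y_{\setminus s})$ (writing $X_{\setminus i}$ for all coordinates of $X$ except $i$); every other coordinate is copied. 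Writing $p_{i,t} = \Prob{X_{i,t}\ne Y_{i,t}}$, the first reduction is that the sparse variation distance is controlled by the worst single-coordinate disagreement: for any event $A$ depending on a variable set $S$ with $\Abs{A}\le\omega$, the indicators of $A$ under $X_t$ and $Y_t$ differ only when the chains disagree somewhere on $S$, so a union bound gives
\[
  \svdist{P^{(t)}\mu_0 - \pi}{\omega}
  \le
  \max_{|S|\le\omega}\Prob{X_{t,S}\ne Y_{t,S}}
  \le
  \omega\,\max_i p_{i,t}.
\]

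Next I would derive a linear recursion for the vector $p_t = (p_{1,t},\dots,p_{n,t})$. Conditioning on $\F_t$, coordinate $i$ is selected with probability $1/n$ and is otherwise unchanged; when it is updated, the probability that the coupled resampling disagrees is exactly $\tvdist{\pi_i(\cdot\mid X_{\setminus i}) - \pi_i(\cdot\mid Y_{\setminus i})}$. Letting $\rho_{ij} = \max_{(X,Y)\in B_j}\tvdist{\pi_i(\cdot\mid X_{\setminus i}) - \pi_i(\cdot\mid Y_{\setminus i})}$ denote the influence of $j$ on $i$, so that $\alpha = \max_i\sum_j \rho_{ij}$, a hybrid argument bounds this conditional distance by the contributions of the currently disagreeing coordinates,
\[
  \tvdist{\pi_i(\cdot\mid X_{\setminus i}) - \pi_i(\cdot\mid Y_{\setminus i})}
  \le
  \sum_{j\ne i}\rho_{ij}\,\mathbf{1}\{X_{j,t}\ne Y_{j,t}\}.
\]
Taking expectations and collecting terms yields, with $R = [\rho_{ij}]$ and $\rho_{ii}=0$, the entrywise recursion
\[
  p_{t+1}
  \le
  \left(\frac{n-1}{n}I + \frac{1}{n}R\right)p_t.
\]

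Finally I would pass to the $\ell_\infty$ norm. The matrix $\frac{n-1}{n}I + \frac{1}{n}R$ is entrywise nonnegative, and its maximum row sum is $\frac{n-1}{n} + \frac{1}{n}\max_i\sum_j\rho_{ij} = 1 - \frac{1-\alpha}{n}$, which is its $\ell_\infty$ operator norm. Since $\norm{p_0}_\infty \le 1$, iterating gives
\[
  \max_i p_{i,t}
  =
  \norm{p_t}_\infty
  \le
  \left(1 - \frac{1-\alpha}{n}\right)^{t}
  \le
  \exp\!\left(-\frac{(1-\alpha)t}{n}\right).
\]
Combining with the first reduction, it suffices to choose $t$ so that $\omega\exp(-(1-\alpha)t/n)\le\epsilon$, i.e.\ $t \ge \frac{n}{1-\alpha}\log(\omega/\epsilon)$, and taking the ceiling gives the stated bound.

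The main obstacle is the hybrid inequality in the second step: bounding $\tvdist{\pi_i(\cdot\mid X_{\setminus i}) - \pi_i(\cdot\mid Y_{\setminus i})}$ by the sum of single-site influences $\rho_{ij}$ over the disagreeing coordinates. This is where the structure encoded by $\alpha$ enters: one interpolates between $X_{\setminus i}$ and $Y_{\setminus i}$ along a path of intermediate states that flips a single disagreeing coordinate at a time, applies the triangle inequality for total variation distance, and bounds each single-flip term by the corresponding $\rho_{ij}$ from Definition~\ref{defnLocalInfluence}. A secondary point to verify is that the maximal coupling realizes the total variation distance as its exact disagreement probability and that, since only one coordinate is updated per step, these single-site couplings compose consistently into one global coupling of the two chains.
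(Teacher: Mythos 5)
Your proposal is correct and follows essentially the same route as the paper: the paper also couples a chain started from $\mu_0$ with one started from $\pi$ using a shared index and an optimal per-step coupling of the conditionals, bounds the conditional disagreement by a hybrid/path argument over disagreeing coordinates (its Lemma~\ref{lemmaLocalInfExvDist}), obtains the contraction $\max_i \Prob{X_{i,t}\ne Y_{i,t}} \le \exp(-(1-\alpha)t/n)$ (its Lemma~\ref{lemmaSequentialLocalPB}), and converts this to sparse variation distance via a union bound and a coupling lemma (its Lemma~\ref{lemmaSparseCoupling}). Your matrix formulation with the influence matrix $R$ and its $\ell_\infty$ operator norm is just a compact restatement of the paper's ``maximize over $i$'' step, so the two arguments coincide.
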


This surprising result says that, in order to produce good marginal
estimates for any model that satisfies Dobrushin's condition, we need only
$O(n)$ samples!  While we could
\crcchange{now use Lemma \ref{lemmaHogwildBiasRFive}}
to bound the sparse estimation time for \hogwild-Gibbs, a more direct
analysis produces a slightly better result, which we present here.

\begin{restatable}{rtheorem}{thmHogwildLocalBias}
  \label{thmHogwildLocalBias}
  For any distribution that satisfies Dobrushin's condition, $\alpha < 1$,
  and for any $\epsilon$ that satisfies
  \[
    \epsilon \ge 2 \omega \alpha \tau (1 - \alpha)^{-1} n^{-1},
  \]
  the $\omega$-sparse estimation time of the \hogwild Gibbs sampling process
  will be bounded by
  \[
    t_{\mathrm{SE-hog}(\omega)}(\epsilon)
    \le
    \left\lceil
    \frac{n}{1 - \alpha}
    \log\left( \frac{\omega}{\epsilon} \right)
    +
    \frac{2 \omega \alpha \tau}{(1 - \alpha)^2 \epsilon}
    \right\rceil.
  \]
\end{restatable}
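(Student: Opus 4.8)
The plan is to run a \emph{direct} coupling of the \hogwild chain against a stationary sequential chain, rather than routing through Lemma \ref{lemmaHogwildBiasRFive} and the triangle inequality (the latter leaves a forcing term $\tfrac{\omega\alpha\tau t}{n^2}$ that grows with $t$, costing an extra $\log$ factor). Concretely, let $X_t \sim P^{(t)}_{\mathrm{hog}}\mu_0$ be the \hogwild chain and let $Y_t$ be a sequential Gibbs chain started from $Y_0 \sim \pi$, so that $Y_t \sim \pi$ for every $t$. I couple the two by forcing them to select the same index $\tilde I_t$ at each step and by using the maximal coupling of the two conditional resampling distributions. Writing $p_{i,t} = \Prob{X_{i,t} \neq Y_{i,t}}$ and collecting these into a vector $p_t$, the standard coupling bound for marginal events gives, for any event $A$ depending on a set $S$ of at most $\omega$ variables, $\Abs{(P^{(t)}_{\mathrm{hog}}\mu_0)(A) - \pi(A)} \le \Prob{X_{S,t} \neq Y_{S,t}} \le \sum_{i \in S} p_{i,t}$, hence $\svdist{P^{(t)}_{\mathrm{hog}}\mu_0 - \pi}{\omega} \le \omega \norm{p_t}_\infty$. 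The whole proof then reduces to bounding $\norm{p_t}_\infty$, and since $\norm{p_0}_\infty \le 1$ regardless of $\mu_0$, this automatically supplies the ``for all $\mu_0$'' quantifier in the definition of the sparse estimation time.

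Next I would derive a one-step recursion for $p_t$. Let $\Gamma$ be the influence matrix with entries $\Gamma_{ij} = \max_{(X,Y)\in B_j} \tvdist{\pi_i(\cdot|X_{I\setminus\{i\}}) - \pi_i(\cdot|Y_{I\setminus\{i\}})}$, so Dobrushin's condition reads $\max_i \sum_j \Gamma_{ij} = \alpha < 1$. When index $i$ is resampled, the maximal coupling makes the chains disagree at $i$ with probability equal to the total variation distance between the two conditionals, which the influence decomposition bounds by $\sum_j \Gamma_{ij}\,\mathbf{1}[\tilde v_{j,t} \neq Y_{j,t}]$, where $\tilde v_{j,t} = X_{j, t - \tilde\tau_{j,t}}$ is the stale value read by \hogwild. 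Splitting $\mathbf{1}[\tilde v_{j,t} \neq Y_{j,t}] \le \mathbf{1}[X_{j,t-\tilde\tau_{j,t}} \neq X_{j,t}] + \mathbf{1}[X_{j,t} \neq Y_{j,t}]$ and taking expectations, the first (``staleness'') term is controlled by the expected number of resamples of $j$ during the delay window, giving $\Prob{X_{j,t-\tilde\tau_{j,t}} \neq X_{j,t}} \le n^{-1}\Exv{\tilde\tau_{j,t}} \le \tau/n$. Combining the probability $1/n$ of selecting $i$ with the probability $1-1/n$ that $p_i$ is left unchanged yields the componentwise recursion $p_{t+1} \le M p_t + \tfrac{\alpha\tau}{n^2}\mathbf{1}$, where $M = (1-\tfrac1n)I + \tfrac1n\Gamma$ has nonnegative entries and $\norm{M}_\infty \le 1 - \tfrac{1-\alpha}{n} =: \rho$.

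Because $M$ is entrywise nonnegative, the recursion iterates monotonically to $p_t \le M^t p_0 + \tfrac{\alpha\tau}{n^2}\sum_{k=0}^{t-1} M^k \mathbf{1}$; taking $\infty$-norms together with $\sum_{k\ge0}\rho^k = n/(1-\alpha)$ gives $\svdist{P^{(t)}_{\mathrm{hog}}\mu_0 - \pi}{\omega} \le \omega\rho^t + \beta$ with $\beta = \tfrac{\omega\alpha\tau}{n(1-\alpha)}$, a bound that is \emph{decreasing} in $t$. The hypothesis $\epsilon \ge 2\omega\alpha\tau(1-\alpha)^{-1}n^{-1} = 2\beta$ is precisely what guarantees $\epsilon - \beta \ge \epsilon/2 > 0$, so it remains to choose $t$ making $\omega\rho^t \le \epsilon - \beta$. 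Using $\rho^t \le \exp(-(1-\alpha)t/n)$ this holds once $t \ge \tfrac{n}{1-\alpha}\log\tfrac{\omega}{\epsilon-\beta}$, and I would finish by expanding $\log\tfrac{\omega}{\epsilon-\beta} = \log\tfrac{\omega}{\epsilon} - \log(1 - \beta/\epsilon)$ and bounding $-\log(1-\beta/\epsilon) \le 2\beta/\epsilon$ (valid since $\beta/\epsilon \le 1/2$); this turns the extra time into $\tfrac{n}{1-\alpha}\cdot\tfrac{2\beta}{\epsilon} = \tfrac{2\omega\alpha\tau}{(1-\alpha)^2\epsilon}$. At this $t$ we have $\omega\rho^t + \beta \le \epsilon$, so this $t$ lies in the set defining $t_{\mathrm{SE-hog}(\omega)}(\epsilon)$, and taking the ceiling gives exactly the claimed bound.

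The main obstacle I expect is making the one-step recursion fully rigorous in the presence of the random stale reads: the staleness estimate $\Prob{X_{j,t-\tilde\tau_{j,t}} \neq X_{j,t}} \le \tau/n$ must be argued by conditioning on $\F_t$ and on the realized value of $\tilde\tau_{j,t}$ (using that $\tilde\tau_{j,t}$ is $\F_{t+1}$-measurable, independent of $\tilde I_t$, with $\Exvc{\tilde\tau_{j,t}}{\F_t} \le \tau$) and by noting that $X_j$ can change only on a step that selects $j$, which happens with probability $1/n$. A secondary point of care is that the entire argument propagates \emph{componentwise} inequalities through the nonnegative matrix $M$, so I must confirm that $\norm{\cdot}_\infty$ is monotone on the nonnegative orthant in order to pass from the vector recursion to the scalar bound $\omega\rho^t + \beta$.
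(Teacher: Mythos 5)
Your proposal is correct and is essentially the paper's own argument: the paper proves this theorem by invoking the secondary result of its Lemma \ref{lemmaHogwildLocalBias}, which is exactly your coupling (shared index $\tilde I_t$, optimal coupling of the conditionals, sequential chain started at $\pi$) and your one-step recursion, stated in scalar form $\phi_{t+1} \le \left(1-\frac{1-\alpha}{n}\right)\phi_t + \frac{\alpha\tau}{n^2}$ with a fixed-point subtraction rather than your matrix form $M = \left(1-\frac{1}{n}\right)I + \frac{1}{n}\Gamma$ with a geometric sum. The remaining steps---the union bound over at most $\omega$ variables via Lemma \ref{lemmaSparseCoupling}, the staleness bound $\tau/n$, and the choice of $t$ using $\epsilon \ge 2\omega\alpha\tau(1-\alpha)^{-1}n^{-1}$ together with $-\log(1-x)\le 2x$ for $x \le \frac{1}{2}$---match the paper's proof line for line.
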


This result gives us a definite class of models for which \hogwild-Gibbs
sampling is guaranteed to produce accurate marginal estimates quickly.

\section{The Second Challenge: Mixing Times}
\label{ssHogwildMixingTimes}

Even though the \hogwild-Gibbs sampler produces biased estimates, it is still
interesting to analyze how long we need to run it
before the samples it produces are independent of its initial conditions.
To measure the efficiency of a Markov chain, it is standard to
use the
\emph{mixing time}.

\begin{definition}[Mixing Time]
The \emph{mixing time}~\citep[p. 55]{levin2009markov} of a stochastic process
\crcchange{with transition matrix $P^{(t)}$ at time $t$}
and target distribution $\pi$ is the
first time $t$ at which, for any initial distribution $\mu_0$, the estimated
distribution is within TV-distance
\crcchange{$\epsilon$ of $P^{(t)} \pi$}.  That is,
\[
  t_{\mathrm{mix}}(\epsilon)
  =
  \min \left\{
    t
  \middle|
    \forall \mu_0, \,
    \tvdist{P^{(t)} \mu_0 - P^{(t)} \pi}
    \le
    \epsilon
  \right\}.
\]
\end{definition}

\subsection{Mixing Time Example}

As we did with bias, here we construct an example model for which asynchronous
execution
disastrously increases the mixing time.  The model we will construct
is rather extreme; we choose this model because simpler, practical models do
not seem to exhibit this type of catastrophic increase in the mixing time.
We start, for some odd constant $N$, with $N$ variables
$X_1, \ldots, X_N$ all in $\{ -1, 1 \}$,
and one factor with energy
\[
  \phi_X(X)
  =
  -M_1 \Abs{\mathbf{1}^T X},
\]
for some very large energy parameter $M_1$.
The resulting distribution will be almost uniform over all states with
$\mathbf{1}^T X \in \{ -1, 1 \}$.
To this model, we add another bank of variables $Y_1, \ldots, Y_N$ all in
$\{ -1, 1 \}$.  These variables also have a single associated factor with
energy
\[
  \phi_Y(X, Y)
  =
  \left\{
    \begin{array}{l l}
      \frac{\beta}{N} \left( \mathbf{1}^T Y \right)^2
        & \text{ if } \Abs{\mathbf{1}^T X} = 1 \\
      M_2 \left( \mathbf{1}^T Y \right)^2
        & \text{ if } \Abs{\mathbf{1}^T X} > 1
    \end{array}
  \right., 
\]
for parameters $\beta$ and $M_2$.
Combining these two factors gives us the overall distribution for our model,
\[
  \pi(X,Y)
  =
  \frac{1}{Z} \exp\left( \phi_X(X) + \phi_Y(X, Y) \right),
\]
where $Z$ is the constant necessary for this to be a distribution.  Roughly,
the $X$ dynamics are constructed to regularly ``generate'' race conditions,
while
the $Y$ dynamics are chosen to ``detect'' these race conditions and mix very
slowly as a result.  This
model is illustrated in Figure \ref{figBadMixModel}.


\begin{figure}[h]%
\centering
\resizebox{.55\myfcwidth}{!}{\Large%
\begin{tikzpicture}[every node/.style={inner sep=0,outer sep=0}]
\draw (0,1.3) node[draw,fill=lightgray,rectangle,minimum size=1.0cm](px) {$\phi_X$};
\draw (-3,0) node[draw,circle,fill=white,minimum size=1.0cm](x1) {$X_1$};
\draw (-1.5,0) node[draw,circle,fill=white,minimum size=1.0cm](x2) {$X_2$};
\draw (0,0) node[draw,circle,fill=white,minimum size=1.0cm](x3) {$X_3$};
\draw (1.5,0) node(xcd){$\cdots$};
\draw (3,0) node[draw,circle,fill=white,minimum size=1.0cm](x4) {$X_N$};
\draw (0,-1.3) node[draw,fill=lightgray,rectangle,minimum size=1.0cm](py) {$\phi_Y$};
\draw (-3,-2.6) node[draw,circle,fill=white,minimum size=1.0cm](y1) {$Y_1$};
\draw (-1.5,-2.6) node[draw,circle,fill=white,minimum size=1.0cm](y2) {$Y_2$};
\draw (0,-2.6) node[draw,circle,fill=white,minimum size=1.0cm](y3) {$Y_3$};
\draw (1.5,-2.6) node(ycd){$\cdots$};
\draw (3,-2.6) node[draw,circle,fill=white,minimum size=1.0cm](y4) {$Y_N$};
\draw (px) -- (x1);
\draw (px) -- (x2);
\draw (px) -- (x3);
\draw (px) -- (xcd);
\draw (px) -- (x4);
\draw (py) -- (x1);
\draw (py) -- (x2);
\draw (py) -- (x3);
\draw (py) -- (xcd);
\draw (py) -- (x4);
\draw (py) -- (y1);
\draw (py) -- (y2);
\draw (py) -- (y3);
\draw (py) -- (ycd);
\draw (py) -- (y4);
\end{tikzpicture}}
\caption{
Factor graph model for mixing time example.
}
\label{figBadMixModel} 
\end{figure}

We simulated two-thread \hogwild-Gibbs on this model, measuring the marginal
probability
that $\mathbf{1}^T Y > 0$; by symmetry, this event has probability
$0.5$ in the stationary distribution for both the sequential and asynchronous
samplers.
Our results, for 
a model with $N = 2001$, $\beta = 0.3$, $M_1 = 10^{10}$, and $M_2 = 100$,
and initial state $X = Y = \mathbf{1}$, are plotted in Figure
\ref{figHogwildBadMix}.  Notice that, while the sequential sampler achieves the 
correct marginal probability relatively quickly, the asynchronous samplers
take a much longer time to achieve the correct result, even for a relatively
small expected delay ($\tau = 0.5$). 
These results suggest that something catastrophic
is happening to the mixing time when we switch from sequential to asynchronous
execution --- and in fact we can prove this is the case.

\begin{figure}[t]%
\centering
\resizebox{!}{.60\myfcwidth}{%
\large \input{plotbadmix.tex}%
}%
\caption{Example wherein asynchronous sampling greatly increases
in mixing time. Marginals computed over $10^4$ trials.}%
\label{figHogwildBadMix}%
\end{figure}

\begin{restatable}{rstatement}{stmtBadMixExample}
  \label{stmtBadMixExample}
  For the example model described above, there exist parameters $M_1$, $M_2$,
  and $\beta$ (as a function of $N$) such that the mixing time of sequential
  Gibbs sampling is $O(N \log N)$ but the mixing time of \hogwild-Gibbs
  sampling, even with $\tau = O(1)$, can be $\exp(\Omega(N))$.
\end{restatable}

The intuition behind this statement is that for sequential Gibbs, the dynamics
of the $X$ part of the chain quickly causes it to have
$\Abs{\mathbf{1}^T X} = 1$, and then remain there for the remainder of the
simulation with high probability.  This in turn causes the energy of the
$\phi_Y$ factor to be essentially $\frac{\beta}{N} (\mathbf{1}^T Y)^2$, a
model which is known to be fast-mixing because it satisfies Dobrushin's
condition.  On the other hand, for \hogwild
Gibbs, due to race conditions we will see $\Abs{\mathbf{1}^T X} \ne 1$
with constant probability; this will cause the effective energy of the $\phi_Y$
factor to be dominated by the $M_2 (\mathbf{1}^T Y)^2$ term, a
model that is known to
take exponential time to mix.

\subsection{Bounding the Mixing Time}

This example shows that fast
mixing of the sequential sampler alone is not sufficient to guarantee
fast mixing
of the \hogwild chain. Consequently, we look for classes of models for
which
we can say something about the mixing time of both sequential and
\hogwild-Gibbs.  Dobrushin's condition is well known to imply rapid mixing of
sequential Gibbs, and it turns out that we can leverage it again here to bound
the mixing time of \hogwild-Gibbs.

\begin{restatable}{rtheorem}{thmSeqHogLocalMixing}
  \label{thmSeqHogLocalMixing}
  Assume that we run Gibbs sampling on a distribution that
  satisfies Dobrushin's condition, $\alpha < 1$.  Then the mixing time of
  sequential Gibbs will be bounded by
  \[
    t_{\mathrm{mix-seq}}(\epsilon)
    \le
    \frac{n}{1 - \alpha} \log\left( \frac{n}{\epsilon} \right).
  \]
  Under the same conditions, the mixing time of \hogwild-Gibbs will be
  bounded by
  \[
    t_{\mathrm{mix-hog}}(\epsilon)
    \le
    \frac{n + \alpha \tau^*}{1 - \alpha}
    \log\left( \frac{n}{\epsilon} \right).
  \]
\end{restatable}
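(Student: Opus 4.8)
The plan is to bound both mixing times by the coupling method. A convenient feature of the mixing-time definition used here is that it compares two copies of the chain started from $\mu_0$ and from $\pi$ but both run under the \emph{same} random transition, so I never need reversibility or a well-defined stationary distribution of the \hogwild chain (neither of which is available). For the sequential bound I would run two copies $X_t,Y_t$ coupled so that at each step they resample the same uniformly chosen coordinate $s$ and draw the new value from the optimal coupling of the two conditionals $\pi_s(\cdot\mid X_{-s})$ and $\pi_s(\cdot\mid Y_{-s})$, where $X_{-s}$ denotes all coordinates other than $s$. Writing $d(X,Y)=\sum_i \mathbf{1}[X_i\ne Y_i]$, a one-step computation shows that resampling the chosen coordinate removes a disagreement there, or, if that coordinate already agreed, creates one with probability $\tvdist{\pi_s(\cdot\mid X_{-s})-\pi_s(\cdot\mid Y_{-s})}$. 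Summing over $s$ and bounding the created disagreement via the total influence (Definition~\ref{defnLocalInfluence}) gives the contraction $\Exvc{d(X_{t+1},Y_{t+1})}{\F_t}\le\bigl(1-\tfrac{1-\alpha}{n}\bigr)d(X_t,Y_t)$. Iterating from $d(X_0,Y_0)\le n$ and applying Markov's inequality yields $\tvdist{P^{(t)}\mu_0-P^{(t)}\pi}\le\Prob{X_t\ne Y_t}\le\Exv{d(X_t,Y_t)}\le n\exp(-\tfrac{(1-\alpha)t}{n})$, and solving $n\exp(-(1-\alpha)t/n)\le\epsilon$ gives the sequential bound.

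For the \hogwild bound I would couple two \hogwild chains that share the same sampled coordinates $\tilde I_t$ and the same delays $\tilde\tau_{i,t}$, again coupling the resampled values optimally. The essential difference is that a resample of coordinate $s$ at time $t$ reads the \emph{stale} values $\tilde v_{j,t}=x_{j,t-\tilde\tau_{j,t}}$, so a newly created disagreement at $s$ is governed not by the present disagreement but by the disagreements at the earlier times $t-\tilde\tau_{j,t}$. Letting $D_{j,t}=\Prob{x^X_{j,t}\ne x^Y_{j,t}}$ and $d_t=\sum_j D_{j,t}$, the one-step analysis now yields a \emph{delayed} recursion of the form $d_{t+1}\le d_t+\tfrac1n\bigl(-d_t+\alpha\sum_j\Exv{D_{j,\,t-\tilde\tau_{j,t}}}\bigr)$, where the outer expectation is over the (independent) delays. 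Relative to the sequential case, the feedback term is evaluated at lagged times, and since we expect $d_t$ to decay geometrically, these lagged terms are inflated relative to $d_t$.

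I would then close the recursion with the exponential ansatz $d_t\le n\exp(-\gamma t)$ for the rate $\gamma=\tfrac{1-\alpha}{n+\alpha\tau^*}$, proving it by induction on $t$. The delay assumptions enter precisely here: since $\gamma\le n^{-1}$, the inflation of a lagged term relative to its undelayed value is at most $e^{\gamma\tilde\tau_{j,t}}\le\exp(n^{-1}\tilde\tau_{j,t})$, so the tail bound $\Exvc{\exp(n^{-1}\tilde\tau_{j,t})}{\F_t}\le 1+n^{-1}\tau^*$ lets me bound $\sum_j\Exv{D_{j,\,t-\tilde\tau_{j,t}}}$ by $(1+n^{-1}\tau^*)\,n e^{-\gamma t}$. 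Substituting this into the recursion and using the specific value of $\gamma$ makes the induction step go through; this is exactly the point at which the effective chain length $n+\alpha\tau^*$ appears. The same Markov-inequality step as before then converts $d_t\le n e^{-\gamma t}$ into the claimed bound, because $n e^{-\gamma t}\le\epsilon$ whenever $t\ge\gamma^{-1}\log(n/\epsilon)=\tfrac{n+\alpha\tau^*}{1-\alpha}\log(n/\epsilon)$.

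I expect the main obstacle to be making the \hogwild delayed recursion rigorous. The stale reads correlate the two coupled chains across time, so I must track the per-coordinate disagreement probabilities $D_{j,t}$ rather than only their sum, and I must justify that the independence of the delays from the sampled index lets the expectation over $\tilde\tau_{j,t}$ separate cleanly from the induction; in particular, passing from the aggregate bound on $d_s$ to a bound on $\sum_j\Exv{D_{j,\,t-\tilde\tau_{j,t}}}$ when the delays are heterogeneous across coordinates is the delicate step, and it is here that aligning the decay rate $\gamma$ with the constant $n^{-1}$ in the moment-generating bound matters (it works because $\gamma\le n^{-1}$). A secondary subtlety, already present in the sequential argument, is the bookkeeping that summing the created disagreements over the resampled coordinate is controlled by $\alpha$ as defined; I would verify this with the same path-coupling accounting used above.
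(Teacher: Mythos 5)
Your coupling architecture (optimal per-coordinate coupling, exponential decay, union bound at the end, and the moment condition on the delays entering through the rate $\gamma \le n^{-1}$) is the same as the paper's, but there is a genuine gap at the very first inequality of each half: the contraction of the \emph{expected Hamming distance} does not follow from the paper's definition of total influence, and it is false for some models that the theorem covers. Write $b_{ij} = \max_{(X,Y)\in B_j}\tvdist{\pi_i(\cdot\mid X)-\pi_i(\cdot\mid Y)}$. Definition \ref{defnLocalInfluence} bounds the \emph{row} sums of this matrix, $\alpha = \max_i \sum_j b_{ij}$ (total influence exerted \emph{on} a fixed resampled variable $i$). Your step ``summing over $s$ and bounding the created disagreement via the total influence'' needs
\[
  \sum_s \tvdist{\pi_s(\cdot\mid X_{-s})-\pi_s(\cdot\mid Y_{-s})}
  \le
  \sum_{j:\,X_j\ne Y_j}\sum_s b_{sj},
\]
to be at most $\alpha\, d(X,Y)$, which requires a bound on the \emph{column} sums $\max_j \sum_s b_{sj}$ (total influence exerted \emph{by} a fixed differing site $j$). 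These are transposes of each other and can differ by a factor of $\Theta(n)$ once variables have different alphabet sizes: consider a ``hub'' variable on a large alphabet that nearly determines each of $n-1$ binary satellites (so $b_{i,\mathrm{hub}} \approx 0.9$ for every satellite $i$), while each individual satellite carries almost no information back about the hub and satellites do not interact. Then every row sum is below $1$, so $\alpha<1$ and the theorem applies; but the hub's column sum is $\approx 0.9(n-1)$, and starting two chains that differ only at the hub, one Gibbs step gives $\Exv{d(X_1,Y_1)} \approx 1.9 > d(X_0,Y_0)$, contradicting your claimed contraction. The identical transposition error occurs in your \hogwild recursion through the feedback term $\frac{\alpha}{n}\sum_j \Exv{D_{j,\,t-\tilde\tau_{j,t}}}$; with the row-sum $\alpha$ one only gets $\alpha \max_j \Exv{D_{j,\,t-\tilde\tau_{j,t}}}$, which cannot be replaced by the average.

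The repair is to change the Lyapunov quantity from the sum to the maximum of the per-coordinate disagreement probabilities, which is exactly what the paper does: the row-sum definition yields $\Exv{\tvdist{\pi_i(\cdot\mid X_t)-\pi_i(\cdot\mid Y_t)}} \le \alpha \max_j \Prob{X_{j,t}\ne Y_{j,t}}$ (Lemma \ref{lemmaLocalInfExvDist}), hence $\phi_{t+1}\le\left(1-\frac{1-\alpha}{n}\right)\phi_t$ for $\phi_t=\max_i\Prob{X_{i,t}\ne Y_{i,t}}$ (Lemma \ref{lemmaSequentialLocalPB}), and the union bound $\Prob{X_t\ne Y_t}\le n\phi_t$ is invoked only at the end. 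This switch also dissolves the passage you yourself flag as delicate: with per-coordinate bounds $\phi_s \le e^{-\gamma s}$ for all $s\le t$, the lagged feedback is controlled coordinatewise and no aggregate-to-coordinate step is needed (Lemma \ref{lemmaHogwildLocalPB}). Separately, your way of closing the \hogwild induction is too lossy: bounding $\Exv{e^{\gamma\tilde\tau_{j,t}}}\le\Exv{e^{\tilde\tau_{j,t}/n}}\le 1+\tau^*/n$ forces the condition $\gamma \le \frac{1-\alpha}{n}-\frac{\alpha\tau^*}{n^2}$, which the target rate $\gamma=\frac{1-\alpha}{n+\alpha\tau^*}$ violates (try $\alpha=0.9$, $\tau^*=1$); the paper instead uses convexity of the exponential on $[0,n^{-1}]$ to get $\Exv{e^{\gamma\tilde\tau_{j,t}}}\le 1+\gamma n\left(\Exv{e^{\tilde\tau_{j,t}/n}}-1\right)\le 1+\gamma\tau^*$, and it is this tighter bound that makes precisely the rate $\frac{1-\alpha}{n+\alpha\tau^*}$, and hence the stated mixing-time bound, go through.
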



\crcchange{The above
example does not contradict this result since it does not satisfy Dobrushin's
condition; in fact its total influence is very large and scales with $n$.}
We can compare these two mixing time results as
\begin{equation}
  \label{eqnHogSeqMixingComparison}
  t_{\mathrm{mix-hog}}(\epsilon)
  \approx
  \left(
    1
    +
    \alpha \tau^* n^{-1}
  \right)
  t_{\mathrm{mix-seq}}(\epsilon);
\end{equation}
the bounds on the mixing times differ by a negligible factor of $1 + O(n^{-1})$.
This result shows
that, for problems that satisfy Dobrusin's condition, \hogwild-Gibbs sampling
mixes in about the same time as sequential Gibbs sampling, and is therefore
a practical choice for generating samples.

\subsection{A Positive Example: Ising Model}
\label{ssIsingModel}

To gain intuition here, we consider a simple example.  The Ising
model~\cite{ising1925beitrag} on a graph $G = (V, E)$ is a model
over probability space $\{ -1, 1 \}^V$, and has
distribution
\[
  p(\sigma)
  =
  \frac{1}{Z} \exp\bigg(
    \beta \sum_{(x, y) \in E} \sigma(x) \sigma(y)
    +
    \sum_{x \in V} B_x \sigma(x)
  \bigg),
\]
where $\beta$ is a parameter that is called the \emph{inverse temperature},
the $B_x$ are parameters that encode a \emph{prior} on the variables,
and $Z$ is the normalization constant necessary for this to be a
distribution.  For graphs of maximum degree $\Delta$ and sufficiently small
$\beta$, a bound on the mixing time of Gibbs sampling is known when
$\Delta \tanh \beta \le 1$.
It turns out that the total influence of the Ising model can be bounded by
$\alpha \le \Delta \tanh \beta$, and so this condition
is simply another way of writing Dobrushin's condition.
We can therefore apply Theorem \ref{thmSeqHogLocalMixing} to bound the mixing
time of \hogwild-Gibbs with
\[
  t_{\mathrm{mix}}(\epsilon)
  \le
  \frac{n + \tau^* \Delta \tanh \beta}{1 - \Delta \tanh \beta}
  \log\left(\frac{n}{\epsilon} \right).
\]
This illustrates that the class of graphs we are considering includes some
common, well-studied models.

\subsection{Proof Outline}
\label{ssLocalProofOutline}

Here, we briefly describe the technique used to prove
Theorem \ref{thmSeqHogLocalMixing}; for ease of presentation, we focus on the
case where every variable takes on values in
$\{-1, 1\}$. We start by introducing the idea of a coupling-based
argument~\citep[p. 64]{levin2009markov}, \crcchange{which starts by}
constructing two copies of the same Markov chain,
$X$ and $\bar X$, starting from different states but running together in the
same probability space (i.e. using the same sources of randomness).
For analyzing \hogwild-Gibbs sampling, we share randomness by having both
chains sample the same variable at each iteration and sample it such that the
resulting values are maximally correlated---additionally both chains are
subject to the same \hogwild delays $\tilde \tau_{i,t}$.

At some random
time, called the \emph{coupling time} $T_{\mathrm{c}}$, the chains will
become equal---\crcchange{regardless} of their initial conditions.
Using this, we can
bound the mixing time with
\[
  t_{\mathrm{mix}}(\epsilon)
  \le
  \min \{
    t
  \mid
    \mathbf{P}(T_{\mathrm{c}} > t)
    \le
    \epsilon
  \}.
\]

In order to bound the probability that the
chains are not equal at a particular time $t$, we focus on the
quantity
\begin{equation}
  \label{eqnPhiMaxProbBar}
  \phi_t = \max_i \Prob{ X_{i,t} \ne \bar X_{i,t} }.
\end{equation}
Under the conditions of Theorem \ref{thmSeqHogLocalMixing}, we are able to
bound this using the total influence parameter.
From here, we notice that by the union bound, 
$
\mathbf{P}(T_{\mathrm{c}} > t)
\le
n \phi_t.
$
Combining this with Equation \ref{eqnPhiMaxProbBar}
and reducing the subsequent expression lets us bound the mixing time, producing
the result of
Theorem \ref{thmSeqHogLocalMixing}.

\section{Experiments}

Now that we have derived a theoretical characterization of the behavior of
\hogwild-Gibbs sampling, we examine whether this characterization holds up
under experimental evaluation.  First, we examine the mixing time claims we
made in Section \ref{ssHogwildMixingTimes}.  Specifically, we want to check
whether increasing the expected delay parameter $\tau^*$ actually increases the
mixing time as predicted by
Equation \ref{eqnHogSeqMixingComparison}.  

To do this, we simulated \hogwild-Gibbs sampling running on a random synthetic
Ising model graph of order $n = 1000$, degree $\Delta = 3$, inverse
temperature $\beta = 0.2$, and prior weights $E_x = 0$. 
This model has total influence $\alpha \le 0.6$, and
Theorem \ref{thmSeqHogLocalMixing} guarantees that it will mix rapidly.
Unfortunately, the mixing time of a chain is difficult to calculate
experimentally.  While techniques such as coupling from the
past~\cite{propp1996exact} exist for estimating the mixing time, using these
techniques in order to expose the (relatively small) dependence of the mixing
time on $\tau$ proved to be computationally intractable.

Instead, we use a technique called coupling to the future.  We initialize two
chains, $X$ and $Y$, by setting all the variables in $X_0$ to $1$ and all the
variables in $Y_0$ to $-1$.  We proceed by simulating a
coupling between the two chains, and return
the coupling time $T_{\mathrm{c}}$.  Our estimate
of the mixing time will then be $\hat t(\epsilon)$, where
$\mathbf{P}(T_{\mathrm{c}} \ge \hat t(\epsilon)) = \epsilon$.

\begin{figure}[t]%
\centering
\resizebox{!}{.65\myfcwidth}{%
\large \input{plottauest.tex}%
}%
\caption{Comparison of estimated mixing time and theory-predicted (by
Equation \ref{eqnHogSeqMixingComparison}) mixing time
as $\tau$ increases for a synthetic Ising model graph
($n = 1000$, $\Delta = 3$).}%
\label{figMixingTimeTau}%
\end{figure}

\begin{restatable}{rstatement}{stmtExperimentalUpperBound}
  \label{stmtExperimentalUpperBound}
  This experimental estimate is an upper bound for the
  mixing time.  That is,
  $
    \hat t(\epsilon)
    \ge
    t_{\mathrm{mix}}(\epsilon)
  $.
\end{restatable}

To estimate $\hat t(\epsilon)$, we ran $10000$ instances of the coupling
experiment, and returned the sample estimate of $\hat t(1/4)$.  To compare
across a range of $\tau^*$, we selected the $\tilde \tau_{i,t}$ to be
independent and identically distributed according to the maximum-entropy
distribution supported on $\{0, 1, \ldots, 200 \}$ consistent with a particular
assignment of $\tau^*$.  The resulting
estimates are plotted as the blue series in Figure \ref{figMixingTimeTau}.
The red line represents the mixing time that would be predicted by naively
applying Equation \ref{eqnHogSeqMixingComparison} using the estimate of the
sequential mixing time as a starting point --- we can see that it is a very
good match for the experimental results.  This experiment shows that, at least
for one archetypal model, our theory accurately characterizes the behavior of \hogwild
Gibbs sampling as the delay parameter $\tau^*$ is changed, and that using
\hogwild-Gibbs doesn't cause the model to catastrophically fail
to mix.

\begin{figure}[t]%
\centering
\resizebox{!}{.65\myfcwidth}{%
\large \input{plotscaling.tex}%
}%
\caption{Speedup of \hogwild and multi-model Gibbs sampling on large KBP dataset ($11$ GB).}%
\label{figHogwildSpeedup}%
\end{figure}

Of course, in order for \hogwild-Gibbs to be useful, it must also speed up the
execution of Gibbs sampling on some practical models.  It is already known
that this is the case, as these types of algorithms been widely
implemented in practice~\cite{smyth2009asynchronous,smola2010architecture}.
To further test this, we ran
\hogwild-Gibbs sampling on a real-world $11$ GB Knowledge Base Population
dataset (derived from the TAC-KBP challenge)
using a machine with a single-socket, 18-core Xeon E7-8890 CPU and
$1$ TB RAM.  As a
comparison, we also ran a
``multi-model'' Gibbs sampler: \crcchange{this consists of multiple threads
with a single execution of Gibbs sampling running independently in each
thread}.  This sampler will
produce the same number of samples as \hogwild-Gibbs, but
\crcchange{will require more memory to store multiple copies of the model.}

Figure \ref{figHogwildSpeedup} reports the speedup, in terms
of wall-clock time, achieved by \hogwild-Gibbs on this dataset.  
On this machine, we get speedups of up to $2.8 \times$, although the program
becomes memory-bandwidth bound at around $8$ threads, and we see no 
significant speedup beyond this.  With any number of workers, the run time of
\hogwild-Gibbs is close to that of multi-model Gibbs, which illustrates that
the additional cache contention caused by the \hogwild updates has little
effect on the algorithm's performance.

\section{Conclusion}

We analyzed \hogwild-Gibbs sampling, a heuristic for parallelized MCMC
sampling, on discrete-valued graphical models.  First, we constructed a
statistical model for \hogwild-Gibbs by adapting a model already used for
the analysis of asynchronous SGD.  Next, we illustrated
a major issue with \hogwild-Gibbs sampling: that it produces biased
samples.  To address this, we proved that
if for some class of models with bounded total influence,
only $O(n)$ sequential Gibbs samples are necessary to produce good
marginal estimates,
then \hogwild-Gibbs
sampling produces equally good estimates after only $O(1)$ additional steps.
Additionally, for models that satisfy Dobrushin's condition
($\alpha < 1$), we proved mixing time bounds for sequential and asynchronous
Gibbs sampling that differ by only a factor of $1 + O(n^{-1})$.
Finally, we showed that our theory matches experimental results, and that
\hogwild-Gibbs produces speedups up to $2.8 \times$ on a real dataset.

\subsection*{Acknowledgments} 

The authors acknowledge the support of:
DARPA FA8750-12-2-0335;
NSF IIS-1247701;
NSF CCF-1111943;
DOE 108845;
NSF CCF-1337375;
DARPA FA8750-13-2-0039; 
NSF IIS-1353606; ONR N000141210041
and N000141310129; NIH U54EB020405; Oracle; NVIDIA; Huawei; SAP Labs;
Sloan Research
Fellowship; Moore Foundation; American Family
Insurance; Google; and Toshiba.

``The views and conclusions contained herein are those of the authors and should not be interpreted as necessarily representing the official policies or endorsements, either expressed or implied, of DARPA, AFRL, NSF, ONR, NIH, or the U.S. Government.''

\bibliography{references}
\bibliographystyle{icml2016}

\iftoggle{withappendix}{

\clearpage

\onecolumn

\appendix

\section{Additional Bias Results}

In this section, we present the following additional result that bounds the
sparse estimation time of general Gibbs samplers.  In particular, this
theorem provides an explicit form of the result given in
Claim \ref{claimHogwildGeneralBiasBigO}.

\begin{restatable}{rtheorem}{thmHogwildGeneralBias}
  \label{thmHogwildGeneralBias}
  Assume that we run \hogwild-Gibbs sampling on a distribution $\pi$ with
  total influence $\alpha$.  Let $\bar t_{\mathrm{SE-seq}(\omega)}(\epsilon)$
  be some upper bound on the $\omega$-sparse estimation time of the
  corresponding sequential chain,
  and assume that it is a convex and decreasing function of $\epsilon$.
  For any $\epsilon > 0$, define
  \[
    c
    =
    \frac{1}{n}
    \bar t_{\mathrm{SE-seq}(\omega)}\left( \frac{\epsilon}{2} \right).
  \]
  Then, as long as $\epsilon$ is large enough that
  \[
    \epsilon
    \ge
    \frac{2 \omega \alpha \tau c}{n}
    e^{c \cdot (\alpha - 1)_+},
  \]
  where we use the notation $(x)_+ = \max(0, x)$,
  the $\omega$-sparse
  estimation time of the \hogwild chain can be bounded with
  \[
    t_{\mathrm{SE-hog}(\omega)}(\epsilon)
    \le
    \left\lceil
    \bar t_{\mathrm{SE-seq}(\omega)}(\epsilon)
    +
    \frac{
      2 \omega \alpha \tau c^2
    }{
      \epsilon
    }
    e^{c \cdot (\alpha - 1)_+}
    \right\rceil.
  \]
\end{restatable}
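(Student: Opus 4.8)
The plan is to combine the bias bound of Lemma~\ref{lemmaHogwildBiasRFive} with the sequential estimation-time hypothesis through a triangle inequality, and then exploit the convexity of $\bar t_{\mathrm{SE-seq}(\omega)}$ to trade a small number of extra iterations for the slack needed to absorb the asynchronous bias. Write $f(\eta) = \bar t_{\mathrm{SE-seq}(\omega)}(\eta)$ and let $b(t) = \frac{\omega \alpha \tau t}{n^2}\exp\!\left(\frac{(\alpha-1)_+}{n}t\right)$ be the right-hand side of Lemma~\ref{lemmaHogwildBiasRFive}; note that $b$ is increasing in $t$, that $cn = f(\epsilon/2)$, and that the hypothesis $\epsilon \ge \frac{2\omega\alpha\tau c}{n}e^{c(\alpha-1)_+}$ is exactly the statement $b(cn) \le \epsilon/2$, since $b(cn) = \frac{\omega\alpha\tau c}{n}e^{c(\alpha-1)_+}$. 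For any time $t$ and any $\mu_0$, the triangle inequality for the $\omega$-sparse variation distance gives $\svdist{P^{(t)}_{\mathrm{hog}}\mu_0 - \pi}{\omega} \le b(t) + \svdist{P^{(t)}_{\mathrm{seq}}\mu_0 - \pi}{\omega}$. Because the sparse estimation time is defined as a minimum, it suffices to exhibit one integer time, no larger than the claimed bound, at which this right-hand side is at most $\epsilon$ for every $\mu_0$.

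The core of the argument is to run for $t = f(\epsilon) + \Delta$ iterations, where $\Delta = \frac{2\omega\alpha\tau c^2}{\epsilon}e^{c(\alpha-1)_+}$ is the extra term in the statement, and to control the sequential contribution there. Defining $\eta$ by $f(\eta) = t$ (so $\eta < \epsilon$), the definition of $\bar t_{\mathrm{SE-seq}(\omega)}$ yields $\svdist{P^{(t)}_{\mathrm{seq}}\mu_0 - \pi}{\omega} \le \eta$ for all $\mu_0$ with no monotonicity needed. In the main regime $t \le cn = f(\epsilon/2)$ we have $\eta \ge \epsilon/2$, so I can invoke convexity of $f$: for a convex function the difference quotient $x \mapsto \frac{f(\epsilon) - f(x)}{\epsilon - x}$ is monotone, and comparing the secants over $[\eta,\epsilon]$ and $[\tfrac{\epsilon}{2},\epsilon]$ gives $\epsilon - \eta \ge \frac{\epsilon\,\Delta}{2\,(f(\epsilon/2)-f(\epsilon))}$. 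Bounding the denominator by $f(\epsilon/2) - f(\epsilon) \le f(\epsilon/2) = cn$ and substituting the chosen $\Delta$ produces $\epsilon - \eta \ge \frac{\epsilon\Delta}{2cn} = b(cn)$, i.e. $\eta \le \epsilon - b(cn)$.

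It then remains to add back the asynchronous bias. In the main regime $t \le cn$, monotonicity of $b$ gives $b(t) \le b(cn)$, so the triangle inequality yields a total bias of at most $(\epsilon - b(cn)) + b(cn) = \epsilon$, and hence $t_{\mathrm{SE-hog}(\omega)}(\epsilon) \le \lceil t\rceil = \lceil f(\epsilon) + \Delta\rceil$. In the complementary regime $t > cn$, I would instead use the time $cn = f(\epsilon/2)$, at which $\svdist{P^{(cn)}_{\mathrm{seq}}\mu_0 - \pi}{\omega} \le \epsilon/2$ and $b(cn) \le \epsilon/2$ by hypothesis, giving a total bias of at most $\epsilon$; since $cn < f(\epsilon) + \Delta$ in this regime, the claimed bound again holds after taking ceilings.

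I expect the main obstacle to be the convexity step: extracting a clean one-sided bound on $\eta$ that is independent of the unknown shape of $\bar t_{\mathrm{SE-seq}(\omega)}$, and then choosing $\Delta$ so that the guaranteed reduction $b(cn)$ in the sequential error exactly cancels the growth of the asynchronous bias $b(t)$. The two-regime split and the hypothesis $b(cn) \le \epsilon/2$ are precisely what make this cancellation valid and keep the argument in the interval $\eta \in [\tfrac{\epsilon}{2},\epsilon)$ where the secant comparison applies. I would also verify that $f$ may be treated as continuous and invertible on the relevant interval (it is, being convex) so that $\eta$ is well defined, and that the final integer rounding is absorbed by the ceiling appearing in the statement.
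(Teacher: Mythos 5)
Your proposal is correct and takes essentially the same route as the paper's proof: both combine the bias bound of Lemma~\ref{lemmaHogwildBiasRFive} with the sequential estimation-time hypothesis via the triangle inequality for sparse variation distance, and both exploit convexity of $\bar t_{\mathrm{SE-seq}(\omega)}$ to show that the extra $\Delta = \frac{2\omega\alpha\tau c^2}{\epsilon}e^{c(\alpha-1)_+}$ iterations reduce the sequential error by at least $b(cn) \le \epsilon/2$, exactly absorbing the asynchronous bias. The only differences are cosmetic---the paper applies convexity to the inverse function $\bar t_{\mathrm{SE-seq}(\omega)}^{-1}$ via a chord over $[\bar t(\epsilon), \bar t(\epsilon/2)]$ and solves for the crossing time (which the hypothesis keeps inside that interval), whereas you apply the secant-slope inequality to $f$ directly and handle the boundary with a two-regime split---and the minor technical wrinkles you flag (continuity and invertibility of $f$, integer rounding, and the implicit assumption that the sequential guarantee persists for $t \ge \bar t(\eta)$) are present in the paper's own argument as well.
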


\section{Proofs}

Here, we provide proofs for the results in the paper.  In the first
subsection, we will state lemmas and known results that we will use in the
subsequent proofs.  Next, we will prove the Claims and Theorems stated in the
body of the paper.  Finally, we will prove the lemmas previously stated.

\subsection{Statements of Lemmas}

First, we state a proposition from ~\citetsec{levin2009markov}.  This proposition
relates the concept of a coupling with the total variation distance between
the distributions of two random variables.

\begin{proposition}[Proposition 4.7 from ~\citetsec{levin2009markov}]
  \label{propCoupling}
  Let $X$ and $Y$ be two random variables that take on values in the same set,
  and let their distributions be $\mu$ and $\nu$, respectively.
  Then for any coupling, $(\bar X, \bar Y)$ it will hold that
  \[
    \tvdist{\mu - \nu} \le \Prob{\bar X \ne \bar Y}.
  \]
  Furthermore, there exists a coupling for which equality is achieved; this
  is called an \emph{optimal} coupling.
\end{proposition}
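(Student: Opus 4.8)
The plan is to prove the two halves of the statement separately: first the inequality $\tvdist{\mu - \nu} \le \Prob{\bar X \ne \bar Y}$, which must hold for \emph{every} coupling, and then the existence of a particular coupling attaining equality.

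For the inequality I would fix an arbitrary coupling $(\bar X, \bar Y)$ and an arbitrary event $A \subseteq \Omega$. Since $\bar X$ has distribution $\mu$ and $\bar Y$ has distribution $\nu$, we may write $\mu(A) - \nu(A) = \Prob{\bar X \in A} - \Prob{\bar Y \in A}$. The key step is to split each of these probabilities according to whether $\bar X = \bar Y$: on the event $\{\bar X = \bar Y\}$ the indicators $\mathbf{1}[\bar X \in A]$ and $\mathbf{1}[\bar Y \in A]$ coincide and cancel, leaving $\mu(A) - \nu(A) = \Prob{\bar X \in A,\, \bar X \ne \bar Y} - \Prob{\bar Y \in A,\, \bar X \ne \bar Y} \le \Prob{\bar X \ne \bar Y}$. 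Taking the maximum over all $A$ and recalling Definition \ref{defnTotalVariationDistance} yields $\tvdist{\mu - \nu} \le \Prob{\bar X \ne \bar Y}$.

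For the equality half I would first record the standard identity $\tvdist{\mu - \nu} = 1 - \sum_{x} \min(\mu(x), \nu(x))$, which follows by checking that the event maximizing Definition \ref{defnTotalVariationDistance} is $A^{*} = \{x : \mu(x) > \nu(x)\}$. Writing $p = \sum_x \min(\mu(x),\nu(x))$, I would then construct the optimal coupling as a two-case mixture: with probability $p$, draw a single value from the normalized overlap $\min(\mu,\nu)/p$ and set $\bar X = \bar Y$ equal to it; with the remaining probability $1 - p$, draw $\bar X$ and $\bar Y$ independently from the normalized residuals $(\mu - \nu)_+/(1-p)$ and $(\nu - \mu)_+/(1-p)$.

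The main work — and the step I expect to be the only real obstacle — is verifying this construction. I would confirm that the $\bar X$-marginal equals $\mu$ by summing the two contributions $\min(\mu(x),\nu(x)) + (\mu(x)-\nu(x))_+$ and splitting on the sign of $\mu(x)-\nu(x)$ (symmetrically for $\bar Y$); this also requires that the residuals are correctly normalized, i.e. $\sum_x (\mu - \nu)_+ = \sum_x (\nu - \mu)_+ = 1 - p$, which follows from $\sum_x(\mu(x)-\nu(x)) = 0$ together with the TV identity above. Finally, since the supports of $(\mu-\nu)_+$ and $(\nu-\mu)_+$ are disjoint, in the second case $\bar X \ne \bar Y$ always, so $\Prob{\bar X \ne \bar Y} = 1 - p = \tvdist{\mu-\nu}$; combined with the inequality already proved, this forces equality and exhibits the claimed optimal coupling.
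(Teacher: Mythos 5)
Your proposal is correct: the splitting-on-$\{\bar X = \bar Y\}$ argument for the inequality and the overlap/residual mixture construction for the optimal coupling are both sound, including the marginal checks and the disjoint-support observation that forces $\Prob{\bar X \ne \bar Y} = 1 - p$. Note, however, that the paper does not prove this proposition at all---it imports it verbatim as Proposition 4.7 of the cited reference (Levin, Peres, and Wilmer)---and your argument is essentially the standard textbook proof given there, so there is nothing to compare beyond observing that you have correctly reconstructed it.
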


We can prove a related result for sparse variation distance.

\begin{restatable}{rlemma}{lemmaSparseCoupling}
  \label{lemmaSparseCoupling}
  Let $X$ and $Y$ be two random variables that each assign values to a set of
  variables $\{1,\ldots,n\}$,
  and let their distributions be $\mu$ and $\nu$, respectively.
  Then for any coupling, $(\bar X, \bar Y)$ it will hold that
  \[
    \svdist{\mu - \nu}{\omega}
    \le
    \max_{I \subseteq \{1,\ldots,n\}, \, \Abs{I} \le \omega}
    \Prob{\exists i \in I, \: \bar X_i \ne \bar Y_i}.
  \]
\end{restatable}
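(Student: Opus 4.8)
The plan is to mirror the classical coupling inequality for total variation distance (Proposition \ref{propCoupling}), but to \emph{localize} the argument to the variables on which the witnessing event actually depends. First I would fix an arbitrary event $A$ with $\Abs{A} \le \omega$ and let $I \subseteq \{1,\ldots,n\}$ be the set of variables upon which $A$ depends, so that $\Abs{I} = \Abs{A} \le \omega$. By the very definition of ``$A$ depends on $I$,'' the indicator $\mathbf{1}_A$ factors through the restriction to coordinates in $I$; that is, the value $\mathbf{1}_A(x)$ is determined by $\{x_i : i \in I\}$ alone and does not depend on the remaining coordinates.

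Next I would express the probability gap through the coupling. Since $\bar X$ has marginal $\mu$ and $\bar Y$ has marginal $\nu$, we have
\[
  \mu(A) - \nu(A) = \Exv{\mathbf{1}_A(\bar X) - \mathbf{1}_A(\bar Y)}.
\]
The key observation is that on the event $\{\bar X_i = \bar Y_i \text{ for all } i \in I\}$ the two indicators coincide, because $\mathbf{1}_A$ depends only on coordinates in $I$; hence the integrand vanishes there, while on the complementary event it is bounded by $1$ in absolute value. This yields the pointwise bound $\Abs{\mathbf{1}_A(\bar X) - \mathbf{1}_A(\bar Y)} \le \mathbf{1}[\exists i \in I, \, \bar X_i \ne \bar Y_i]$, and taking expectations gives
\[
  \Abs{\mu(A) - \nu(A)} \le \Prob{\exists i \in I, \: \bar X_i \ne \bar Y_i}.
\]

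Finally, since $\Abs{I} \le \omega$, the right-hand side is at most the maximum over all index sets of size at most $\omega$; and since $A$ was an arbitrary $\omega$-sparse event, taking the maximum over $A$ on the left recovers $\svdist{\mu - \nu}{\omega}$ while the right-hand side is dominated by the claimed maximum, completing the argument. I do not expect a genuine obstacle here beyond careful bookkeeping: the one step requiring care is the formalization that ``$A$ depends on at most $\omega$ variables'' supplies a concrete witnessing set $I$ with $\Abs{I} \le \omega$ through which $\mathbf{1}_A$ factors, so that disagreements \emph{outside} $I$ can be ignored. This localization is exactly what makes the sparse variation bound potentially much smaller than the total variation bound $\Prob{\bar X \ne \bar Y}$ obtained from the same coupling, since only disagreements among a small set of coordinates contribute.
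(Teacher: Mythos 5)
Your proposal is correct and takes essentially the same approach as the paper's proof: both arguments localize to the witnessing set $I$ of variables on which an $\omega$-sparse event $A$ depends, bound $\Abs{\mu(A) - \nu(A)}$ by $\Prob{\exists i \in I, \: \bar X_i \ne \bar Y_i}$, and then maximize over $A$ and over sets $I$ with $\Abs{I} \le \omega$. The only difference is cosmetic: the paper gets the per-event bound by applying Proposition \ref{propCoupling} to the marginal distributions of the variables in $I$ under the induced coupling $\left( \bar X_I, \bar Y_I \right)$, whereas you prove the same inequality directly from the identity $\mu(A) - \nu(A) = \Exv{\mathbf{1}_A(\bar X) - \mathbf{1}_A(\bar Y)}$ and a pointwise bound on the indicators, which is simply the standard proof of that proposition inlined.
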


We state a lemma that bounds the expected total variation distance
between the marginal distributions of two states using the total
influence $\alpha$.  \crcchange{Note that a similar statement to that proved
in this lemma may be used as
an alternate definition for the total influence $\alpha$; the definition given
in the body of the paper is used because it is more intuitive and does not
require introducing the concept of a coupling.} 
This lemma will be useful later when proving the
subsequent lemmas stated in this subsection.

\begin{restatable}{rlemma}{lemmaLocalInfExvDist}
  \label{lemmaLocalInfExvDist}
  If $\pi$ is a distribution with total influence
  $\alpha$, and $X$ and $Y$ are two random variables that take on values in the
  state space of $\pi$, then for any variable $i$
  \[
    \Exv{
      \tvdist{
        \pi_i(\cdot | X)
        -
        \pi_i(\cdot | Y)
      }
    }
    \le
    \alpha
    \max_j \Prob{X_j \ne Y_j},
  \]
  \crcchange{where, for simplicity of notation, we let
  $\pi_i(\cdot | X)$ denote the conditional distribution of variable $i$
  in $\pi$ given the values of all the other variables in state $X$.}
\end{restatable}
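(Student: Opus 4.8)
Prove Lemma~\ref{lemmaLocalInfExvDist}: for a distribution $\pi$ with total influence $\alpha$, and random variables $X,Y$ over the state space of $\pi$,
$$\Exv{\tvdist{\pi_i(\cdot|X) - \pi_i(\cdot|Y)}} \le \alpha \max_j \Prob{X_j \ne Y_j}.$$

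**What's the setup.** Total influence $\alpha$ is defined via pairs $(X,Y)$ that differ in exactly ONE variable $j$. The lemma wants to bound expected TV-distance for $X,Y$ that may differ in MANY variables. So this is a "telescoping / path coupling" style argument: interpolate from $X$ to $Y$ one variable at a time, applying the single-variable influence bound at each step, and sum.

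**Plan.** I'd introduce a sequence of intermediate states $W^{(0)}, W^{(1)}, \ldots, W^{(n)}$ where $W^{(0)} = X$, $W^{(n)} = Y$, and $W^{(k)}$ agrees with $Y$ on coordinates $\{1,\ldots,k\}$ and with $X$ on $\{k+1,\ldots,n\}$. Then consecutive states $W^{(j-1)}, W^{(j)}$ differ only in coordinate $j$. By the triangle inequality for TV-distance,
$$\tvdist{\pi_i(\cdot|X)-\pi_i(\cdot|Y)} \le \sum_{j=1}^n \tvdist{\pi_i(\cdot|W^{(j-1)})-\pi_i(\cdot|W^{(j)})}.$$
Crucially, $W^{(j-1)}$ and $W^{(j)}$ differ only in coordinate $j$, and they differ there only when $X_j \ne Y_j$ (if $X_j = Y_j$ the two states are identical and that TV term is zero). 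So each summand is bounded by $\mathbf{1}[X_j \ne Y_j] \cdot \beta_j$, where $\beta_j = \max_{(U,V)\in B_j}\tvdist{\pi_i(\cdot|U_{I\setminus\{i\}})-\pi_i(\cdot|V_{I\setminus\{i\}})}$ is exactly the per-coordinate quantity appearing inside the sum in Definition~\ref{defnLocalInfluence}.

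**Finishing.** Taking expectations and using linearity,
$$\Exv{\tvdist{\pi_i(\cdot|X)-\pi_i(\cdot|Y)}} \le \sum_{j=1}^n \beta_j\,\Prob{X_j \ne Y_j} \le \Big(\max_j \Prob{X_j\ne Y_j}\Big)\sum_{j=1}^n \beta_j.$$
The sum $\sum_j \beta_j$ is the inner sum in the definition of $\alpha$ for the fixed variable $i$, which is at most $\max_i \sum_j \beta_j = \alpha$. Substituting gives the claimed bound.

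**Expected obstacle.** The only delicate point is bookkeeping around the coordinate $i$ itself: the conditional $\pi_i(\cdot|\,\cdot\,)$ is a function of all variables \emph{other than} $i$, matching the $X_{I\setminus\{i\}}$ notation in Definition~\ref{defnLocalInfluence}. I would make sure the interpolation path either skips coordinate $i$ or that its value simply doesn't enter $\pi_i$, so that the $j=i$ term contributes nothing—consistent with $\beta_i$ being effectively zero since changing $X_i$ alone does not change the conditioning set for variable $i$. A second minor subtlety is that $W^{(j-1)}, W^{(j)}$ are themselves random (they depend on $X$ and $Y$), but since the influence bound $\beta_j$ holds \emph{uniformly} over all pairs in $B_j$, it applies pointwise for every realization, so pulling it outside the expectation is valid and the remaining randomness is just the indicator $\mathbf{1}[X_j\ne Y_j]$ whose expectation is $\Prob{X_j \ne Y_j}$.
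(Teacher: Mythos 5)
Your proposal is correct and follows essentially the same argument as the paper's proof: the paper also interpolates between $X$ and $Y$ through intermediate states $Z_k$ (identical to your $W^{(k)}$), applies the triangle inequality for total variation distance, uses the indicator $\mathbf{1}_{X_k \ne Y_k}$ together with the fact that consecutive interpolants lie in $B_k$, and then takes expectations and bounds the resulting sum by $\alpha$ via the definition of total influence. Your handling of the two subtleties (the coordinate $i$ itself contributing nothing, and the uniformity of the influence bound over realizations of the random intermediate states) matches what the paper's proof implicitly relies on.
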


Next, we state three lemmas, each of which give bounds on the quantity
\[
  \Prob{X_{i,t} \ne Y_{i,t}}
\]
for some coupling of two (potentially asynchronous) Gibbs sampling chains.
First, we state the result for comparing two synchronous chains.

\begin{restatable}{rlemma}{lemmaSequentialLocalPB}
  \label{lemmaSequentialLocalPB}
  Consider sequential Gibbs sampling on a distribution $\pi$ with total
  influence $\alpha$.  Then, for any initial states $(X_0, Y_0)$ there exists a 
  coupling of the chains $(X_t, Y_t)$ such that for any variable $i$ and any
  time $t$,
  \[
    \Prob{X_{i,t} \ne Y_{i,t}}
    \le
    \exp\left( - \frac{1 - \alpha}{n} t \right).
  \]
\end{restatable}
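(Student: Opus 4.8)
The plan is to run a standard path-coupling-style argument in which I track, for each variable, the probability that the two chains disagree. First I would construct an explicit coupling $(X_t, Y_t)$ of the two sequential Gibbs chains: at each step, both chains use the \emph{same} sampled index $s = \tilde I_t$, drawn uniformly from $\{1, \ldots, n\}$, and then, conditioned on $s$ and on the current states, I resample the two new values for variable $s$ according to an \emph{optimal} (maximal) coupling of the conditional distributions $\pi_s(\cdot \mid X_t)$ and $\pi_s(\cdot \mid Y_t)$, whose existence is guaranteed by Proposition \ref{propCoupling}. All variables other than $s$ are left unchanged in both chains.

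Next I would define $\phi_t = \max_i \Prob{X_{i,t} \ne Y_{i,t}}$ and derive a one-step recursion. Fixing a variable $i$ and conditioning on the states at time $t$ and on the index $s$: if $s \ne i$ (probability $\frac{n-1}{n}$), variable $i$ is untouched in both chains, so agreement or disagreement is preserved; if $s = i$ (probability $\frac{1}{n}$), optimality of the coupling makes the probability that the resampled values disagree exactly equal to $\tvdist{\pi_i(\cdot \mid X_t) - \pi_i(\cdot \mid Y_t)}$. Combining the two cases and taking expectations over the states at time $t$ gives
\[
  \Prob{X_{i,t+1} \ne Y_{i,t+1}}
  =
  \frac{n-1}{n} \Prob{X_{i,t} \ne Y_{i,t}}
  +
  \frac{1}{n}
  \Exv{ \tvdist{\pi_i(\cdot \mid X_t) - \pi_i(\cdot \mid Y_t)} }.
\]

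I would then apply Lemma \ref{lemmaLocalInfExvDist} to bound the expected total variation term by $\alpha \phi_t$, and bound the first term using $\Prob{X_{i,t} \ne Y_{i,t}} \le \phi_t$. This yields
\[
  \Prob{X_{i,t+1} \ne Y_{i,t+1}}
  \le
  \left( 1 - \frac{1 - \alpha}{n} \right) \phi_t,
\]
and taking the maximum over $i$ gives the recursion $\phi_{t+1} \le \left(1 - \frac{1-\alpha}{n}\right) \phi_t$. Since $\phi_0 \le 1$, a straightforward induction together with the elementary inequality $1 - x \le e^{-x}$ gives $\phi_t \le \left(1 - \frac{1-\alpha}{n}\right)^t \le \exp\!\left(-\frac{1-\alpha}{n} t\right)$, which is precisely the claimed bound since $\Prob{X_{i,t} \ne Y_{i,t}} \le \phi_t$.

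The main obstacle I anticipate is not any single calculation—the recursion itself is short—but rather making the coupling construction rigorous. In particular, I must check that the step-by-step use of maximal couplings genuinely assembles into one valid coupling of the full chains, and that the conditioning used to derive the recursion is internally consistent: the shared index $\tilde I_t$ must stay uniform and independent of the states, and the resampling randomness must be cleanly separated from the choice of $s$ so that the $s \ne i$ and $s = i$ cases combine as written. Once the coupling is in place, the reduction to $\phi_t$ and the invocation of Lemma \ref{lemmaLocalInfExvDist} are routine.
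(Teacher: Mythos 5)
Your proposal is correct and follows essentially the same route as the paper's own proof: the identical coupling (shared uniform index plus optimal coupling of the conditional distributions via Proposition \ref{propCoupling}), the same one-step recursion via the law of total probability, the same invocation of Lemma \ref{lemmaLocalInfExvDist}, and the same geometric-decay conclusion using $1 - x \le e^{-x}$. No gaps; the concern you raise about assembling the per-step maximal couplings into a coupling of the full chains is handled in the paper exactly as you suggest, by defining the coupling iteratively over time.
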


Second, we state the result comparing two \hogwild chains.

\begin{restatable}{rlemma}{lemmaHogwildLocalPB}
  \label{lemmaHogwildLocalPB}
  Consider any model of \hogwild-Gibbs sampling on a distribution $\pi$ with
  total influence $\alpha$.  Then, for any initial states $(X_0, Y_0)$ there
  exists a coupling  $(X_t, Y_t)$ of the \hogwild-Gibbs sampling chains
  starting at $X_0$ and $Y_0$ respectively such that for any variable $i$
  and any time $t$,
  \[
    \Prob{X_{i,t} \ne Y_{i,t}}
    \le
    \exp\left(- \frac{1 - \alpha}{n + \alpha \tau^*} t \right).
  \]
\end{restatable}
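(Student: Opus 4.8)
The plan is to run a coupling argument of the kind sketched in Section~\ref{ssLocalProofOutline} and to reduce everything to a single scalar recursion on $\phi_t = \max_i \Prob{X_{i,t} \ne Y_{i,t}}$. First I would fix the coupling: both chains select the same index $\tilde I_t$ at every step, both are subject to the same delays $\tilde \tau_{i,t}$, and whenever $\tilde I_t = i$ the two resampled values are drawn from the optimal (maximal) coupling of the two stale conditional distributions, so that by Proposition~\ref{propCoupling} the probability they disagree equals $\tvdist{\pi_i(\cdot \mid \tilde v^X) - \pi_i(\cdot \mid \tilde v^Y)}$, where $\tilde v^X$ and $\tilde v^Y$ are the (same-delay) stale reads of the two chains.

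Next I would derive the one-step recursion. Writing $p_{i,t} = \Prob{X_{i,t} \ne Y_{i,t}}$ and conditioning on $\tilde I_t$: with probability $(n-1)/n$ coordinate $i$ is untouched and contributes $\frac{n-1}{n}p_{i,t}$, while with probability $1/n$ it is resampled and contributes the expected total variation distance above. Applying Lemma~\ref{lemmaLocalInfExvDist} to the stale-read variables bounds this expected distance by $\alpha \max_j \Prob{\tilde v^X_{j,t} \ne \tilde v^Y_{j,t}}$, so that, since the second term below does not depend on $i$ and $p_{i,t} \le \phi_t$, taking the maximum over $i$ gives
\[
  \phi_{t+1} \le \left(1 - \tfrac{1}{n}\right)\phi_t + \tfrac{\alpha}{n}\max_j \Prob{\tilde v^X_{j,t} \ne \tilde v^Y_{j,t}}.
\]

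The delays enter only through the last term. Since a stale read of coordinate $j$ is the value $t - \tilde \tau_{j,t}$ steps in the past, and (by the standard hardware model) the delay is independent of the realized chain values, $\Prob{\tilde v^X_{j,t} \ne \tilde v^Y_{j,t}} = \Exv{p_{j,\,t - \tilde \tau_{j,t}}}$. I would then prove $\phi_t \le e^{-\gamma t}$ with $\gamma = \frac{1-\alpha}{n + \alpha \tau^*}$ by strong induction. Under the inductive hypothesis this term is at most $e^{-\gamma t}\Exv{e^{\gamma \tilde \tau_{j,t}}}$, and the crucial estimate is $\Exv{e^{\gamma \tilde \tau_{j,t}}} \le 1 + \gamma \tau^*$: because $\gamma n \le 1$, writing $e^{\gamma \tilde \tau} = (e^{\tilde \tau / n})^{\gamma n}$ and using Jensen together with $(1+x)^c \le 1 + cx$ for $c \in [0,1]$ converts the assumed moment bound $\Exvc{\exp(n^{-1}\tilde \tau_{j,t})}{\F_t} \le 1 + n^{-1}\tau^*$ into exactly this form. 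Substituting back, the inductive step reduces to the scalar inequality $\left(1 - \tfrac{1}{n}\right) + \tfrac{\alpha}{n}\left(1 + \gamma \tau^*\right) \le e^{-\gamma}$, which after $e^{-\gamma} \ge 1 - \gamma$ rearranges to $\gamma\left(1 + \tfrac{\alpha \tau^*}{n}\right) \le \tfrac{1-\alpha}{n}$, i.e. precisely $\gamma \le \frac{1-\alpha}{n + \alpha \tau^*}$; this is where the denominator $n + \alpha \tau^*$ comes from.

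The main obstacle I anticipate is the treatment of the stale reads: closing the delayed recursion against a clean exponential envelope forces the moment bound to be used at exponent $\gamma$ rather than at $1/n$, so I must verify $\gamma n \le 1$ and pass from the $1/n$-moment assumption to the $\gamma$-moment via the power-mean/Bernoulli step, and I must justify treating the delay as independent of the chain's disagreement pattern. Everything else—the one-step recursion, the total-influence estimate, and the final scalar inequality—is routine; as a sanity check, setting $\tau^* = 0$ recovers the sequential bound of Lemma~\ref{lemmaSequentialLocalPB}.
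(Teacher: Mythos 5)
Your proposal is correct and follows essentially the same route as the paper's proof: the same coupling (shared sampling index, shared delays, optimal coupling of the stale conditional distributions), the same one-step recursion controlled by Lemma \ref{lemmaLocalInfExvDist}, and the same exponential envelope $e^{-\gamma t}$ with $\gamma = \frac{1-\alpha}{n+\alpha\tau^*}$ closed using the moment bound on $\tilde\tau_{i,t}$. The only cosmetic differences are that the paper packages your strong induction as the Monotonic Sequence Domination Lemma (Lemma \ref{lemmaMSDL}), and it obtains $\Exv{\exp\left(\gamma \tilde\tau_{j,t}\right)} \le 1+\gamma\tau^*$ via a chord/convexity bound on $r \mapsto \exp\left(r\tilde\tau_{j,t}\right)$ over $[0,n^{-1}]$ rather than your Jensen-plus-Bernoulli step.
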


Third, we state the result comparing a sequential and an asynchronous chain.

\begin{restatable}{rlemma}{lemmaHogwildLocalBias}
  \label{lemmaHogwildLocalBias}
  Consider any model of \hogwild-Gibbs sampling on a distribution $\pi$ with
  total influence $\alpha$.  Then if for any initial states $(X_0, Y_0)$ we can
  construct a
  coupling $(X_t, Y_t)$ such that the process $X_t$ is distributed according
  to the dynamics of \hogwild-Gibbs, the process $Y_t$ is distributed
  according to the dynamics of sequential Gibbs, and for any time $t$,
  \[
    \max_i
    \Prob{X_{i,t+1} \ne Y_{i,t+1}}
    \le
    \left( 1 - \frac{1 - \alpha}{n} \right)
    \max_i
    \Prob{X_{i,t} \ne Y_{i,t}}
    +
    \frac{\alpha \tau}{n^2}.
  \]
  As a secondary result, if the chain satisfies Dobrushin's condition
  ($\alpha < 1$), then for any variable $i$ and any time $t$,
  \[
    \Prob{X_{i,t} \ne Y_{i,t}}
    \le
    \exp\left( - \frac{1 - \alpha}{n} t \right)
    +
    \frac{\alpha \tau}{(1 - \alpha) n}.
  \]
\end{restatable}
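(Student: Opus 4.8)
The plan is to prove the one-step recursion directly and then unroll it to obtain the exponential bound. Let $\phi_t = \max_i \Prob{X_{i,t} \ne Y_{i,t}}$ and fix the coupling in which, at every step, both chains select the same index $\tilde I_t$, the \hogwild chain $X$ uses the same delays $\tilde\tau_{i,t}$ as in its own dynamics, the sequential chain $Y$ reads the current (undelayed) state, and, conditioned on $\F_t$ and the selected index, the two resampled values are drawn from an optimal coupling of their conditional distributions. This makes the marginals of $X$ and $Y$ correct while maximizing the chance they agree after a resample. First I would fix a variable $i$ and split on the event $\{\tilde I_t = i\}$, which is independent of $\F_t$ and occurs with probability $1/n$. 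On $\{\tilde I_t \ne i\}$ neither coordinate is touched, contributing at most $(1-\tfrac1n)\Prob{X_{i,t} \ne Y_{i,t}} \le (1-\tfrac1n)\phi_t$.

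On $\{\tilde I_t = i\}$, the optimal coupling gives $\Probc{X_{i,t+1}\ne Y_{i,t+1}}{\F_t,\tilde I_t = i} = \tvdist{\pi_i(\cdot\mid\tilde v) - \pi_i(\cdot\mid Y_{\cdot,t})}$, where $\tilde v$ are the stale values seen by $X$. Since the delays and the past states are independent of $\tilde I_t$, this case contributes $\tfrac1n \Exv{\tvdist{\pi_i(\cdot\mid\tilde v)-\pi_i(\cdot\mid Y_{\cdot,t})}}$, which I would bound via Lemma \ref{lemmaLocalInfExvDist} by $\tfrac{\alpha}{n}\max_j \Prob{\tilde v_j \ne Y_{j,t}}$. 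The remaining task is to control $\Prob{\tilde v_j \ne Y_{j,t}} = \Prob{X_{j,t-\tilde\tau_{j,t}} \ne Y_{j,t}}$. Because $\{X_{j,t-\tilde\tau_{j,t}} \ne Y_{j,t}\} \subseteq \{X_{j,t}\ne Y_{j,t}\} \cup \{X_{j,t-\tilde\tau_{j,t}} \ne X_{j,t}\}$, a union bound gives $\Prob{\tilde v_j \ne Y_{j,t}} \le \phi_t + \Prob{X_{j,t-\tilde\tau_{j,t}}\ne X_{j,t}}$.

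The heart of the argument, and the step I expect to be the main obstacle, is bounding the race-condition probability $\Prob{X_{j,t-\tilde\tau_{j,t}}\ne X_{j,t}}$ by $\tau/n$. The idea is that $X_j$ changes only when it is selected, an event of probability $1/n$ per step, so the stale and current values can differ only if $j$ was chosen at least once inside the window of length $\tilde\tau_{j,t}$; dominating that event by the expected number of such selections and using $\Exvc{\tilde\tau_{j,t}}{\F_t}\le\tau$ should yield $\tau/n$. Care is needed with measurability here: the window length $\tilde\tau_{j,t}$ is only $\F_{t+1}$-measurable, so I must handle its interaction with the earlier selection variables $\tilde I_s$ ($s<t$) carefully rather than naively pulling the expectation through. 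Assembling the two cases and taking the maximum over $i$ gives
\[
  \phi_{t+1} \le \Big(1 - \tfrac{1}{n}\Big)\phi_t + \tfrac{\alpha}{n}\Big(\phi_t + \tfrac{\tau}{n}\Big) = \Big(1 - \tfrac{1-\alpha}{n}\Big)\phi_t + \tfrac{\alpha\tau}{n^2},
\]
which is the first claim and holds for every $\alpha$.

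For the secondary result I would identify the fixed point $\phi^\star = \tfrac{\alpha\tau}{(1-\alpha)n}$ of this affine recursion, which is meaningful precisely because $\alpha<1$ makes the multiplier strictly less than one. Writing $\phi_t - \phi^\star \le \big(1 - \tfrac{1-\alpha}{n}\big)^t(\phi_0 - \phi^\star)$, then using $\phi_0 \le 1$ and $1 - x \le e^{-x}$, yields $\phi_t \le \exp\!\big(-\tfrac{1-\alpha}{n}t\big) + \tfrac{\alpha\tau}{(1-\alpha)n}$; since $\Prob{X_{i,t}\ne Y_{i,t}}\le\phi_t$ for every $i$, this is exactly the stated bound.
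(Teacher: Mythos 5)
Your proposal is correct and follows essentially the same route as the paper's proof: the same coupling (shared index $\tilde I_t$, optimal coupling of the conditionals, delays only on the \hogwild chain), the same split on whether variable $i$ is resampled, the same invocation of Lemma \ref{lemmaLocalInfExvDist}, the same union-bound control of the stale-read disagreement via the $1/n$-per-step selection probability and $\Exv{\tilde\tau_{j,t}} \le \tau$, and the same fixed-point unrolling for the secondary bound. The measurability concern you flag is resolved exactly as you anticipate—and as the paper does—by decomposing over the value $k$ of $\tilde\tau_{j,t}$ (using its modeled independence from the selection history) before applying the union bound over the window $[t-k,t)$.
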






\begin{restatable}[Monotonic Sequence Domination Lemma]{rlemma}{lemmaMSDL}
  \label{lemmaMSDL}
  Let $x_0, x_1, \ldots$ be a sequence such that, for all $t$,
  \[
    x_{t+1} \le f_t(x_t, x_{t-1}, \ldots, x_0),
  \]
  where $f_t$ is a function that is monotonically increasing in all of its
  arguments.  Then, for any sequence $y_0, y_1, \ldots$, if $x_0 = y_0$ and
  for all $t$,
  \[
    y_{t+1} \ge f_t(y_t, y_{t-1}, \ldots, y_0),
  \]
  then for all $t$,
  \[
    x_t \le y_t.
  \]
\end{restatable}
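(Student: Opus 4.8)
The plan is to prove the claim by strong induction on $t$. The base case $t = 0$ is immediate: by assumption $x_0 = y_0$, so in particular $x_0 \le y_0$.

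For the inductive step, I would assume as the inductive hypothesis that $x_s \le y_s$ for every index $s$ with $0 \le s \le t$, and aim to establish $x_{t+1} \le y_{t+1}$. The idea is to sandwich both $x_{t+1}$ and $y_{t+1}$ against the common quantity $f_t$ evaluated on the respective histories. By the defining inequality for the $x$-sequence, $x_{t+1} \le f_t(x_t, x_{t-1}, \ldots, x_0)$; by the defining inequality for the $y$-sequence, $y_{t+1} \ge f_t(y_t, y_{t-1}, \ldots, y_0)$. Since $f_t$ is monotonically increasing in each of its arguments and the inductive hypothesis supplies $x_s \le y_s$ for all $s \le t$, monotonicity gives $f_t(x_t, \ldots, x_0) \le f_t(y_t, \ldots, y_0)$. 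Chaining the three inequalities then yields
\[
  x_{t+1}
  \le
  f_t(x_t, \ldots, x_0)
  \le
  f_t(y_t, \ldots, y_0)
  \le
  y_{t+1},
\]
which completes the inductive step and hence the proof.

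There is no substantial obstacle here; the statement is essentially a discrete comparison (or monotonicity) principle, and the entire content is the bookkeeping above. The only point that warrants a word of care is that the monotonicity of $f_t$ must be applied jointly across all of its arguments: I would justify $f_t(x_t, \ldots, x_0) \le f_t(y_t, \ldots, y_0)$ by increasing the arguments one coordinate at a time from the $x$-values to the $y$-values, each such step never decreasing the value of $f_t$ because $f_t$ is increasing in that coordinate and $x_s \le y_s$. It is worth noting that \emph{strong} induction (rather than ordinary induction) is essential, since $f_t$ may depend on the entire past history $x_t, x_{t-1}, \ldots, x_0$ and not merely on the single preceding term.
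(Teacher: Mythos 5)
Your proof is correct and follows essentially the same argument as the paper: strong induction on $t$, with the inductive step chaining $x_{t+1} \le f_t(x_t,\ldots,x_0) \le f_t(y_t,\ldots,y_0) \le y_{t+1}$ via monotonicity. The coordinate-by-coordinate justification of the middle inequality is a nice touch of rigor, but otherwise the two proofs are identical.
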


\begin{restatable}{rlemma}{lemmaModelX}
  \label{lemmaModelX}
  Consider the model on $N$ variables $X_i$, for $N$ odd, where each $X_i$
  takes on values in $\{ -1, 1 \}$ and has probability
  \[
    \pi(X)
    =
    \frac{1}{Z_X}
    \left\{ 
      \begin{array}{l l}
        1 & \text{if } \Abs{\mathbf{1}^T X} = 1 \\
        0 & \text{if } \Abs{\mathbf{1}^T X} > 1
      \end{array}
    \right.
  \]
  Then Gibbs sampling on this model \crcchange{(assuming that we allow the
  chain to start only at a state $X$ where $\pi(X) > 0$)} has mixing time
  \[
    t_{\mathrm{mix}} = O(n \log n).
  \]
\end{restatable}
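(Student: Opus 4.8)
The plan is to reduce the single-site dynamics to a Bernoulli--Laplace swap walk, whose $O(N \log N)$ mixing is classical and provable by path coupling, and then transfer the bound back through a simple time-change. First I would work out the exact conditional dynamics. Since $N$ is odd, every support state has $\Abs{\mathbf{1}^T X} = 1$, so it is determined by the set $A = \{ i : X_i = +1 \}$, whose size is either $(N-1)/2$ or $(N+1)/2$; write $\sigma = \mathbf{1}^T X \in \{-1,+1\}$ for the current sign. When we resample coordinate $s$, let $S_{-s} = \sum_{j \ne s} X_j = \sigma - X_s$. From a support state one always has $S_{-s} \in \{-2,0,2\}$, and the conditional of $X_s$ is uniform on $\{-1,+1\}$ exactly when $S_{-s}=0$ (i.e.\ $X_s = \sigma$, the majority value) and a point mass otherwise. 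Hence a resample does nothing unless $s$ is a majority-valued coordinate and the fresh uniform bit equals the minority value, in which case $X_s$ flips and $\sigma$ toggles. In set language, from sign $+1$ a toggle removes a uniform element of $A$, from sign $-1$ a toggle adds a uniform element of $A^c$, and between toggles the state is frozen.

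Next I would set up the reduction. The majority set always has size $(N+1)/2$ regardless of the state, so the probability of a toggle at each step is exactly $\frac{(N+1)/2}{N}\cdot\frac12 = \frac{N+1}{4N}$, independent of history; thus the number of toggles in $T$ steps is $\mathrm{Binomial}(T, \frac{N+1}{4N})$ and is $\Theta(T)$ with high probability by a Chernoff bound. Observing the state at the times it returns to sign $+1$, one toggle removes a uniform element and the next adds a uniform element of the complement, so the induced chain on $(N+1)/2$-subsets is exactly a lazy Bernoulli--Laplace swap walk (holding probability $\tfrac{2}{N+1}$, the chance of re-adding the element just removed).

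I would then bound the mixing of this swap walk by path coupling on symmetric-difference distance. For two subsets one swap apart, coupling the in- and out-choices gives expected distance $2\big(1 - \tfrac{N-2}{k(N-k)}\big)$ with $k = (N+1)/2$, i.e.\ contraction by a factor $1 - \Theta(1/N)$; since the diameter is $\Theta(N)$, the swap walk mixes in $O(N \log N)$ swaps. Converting swaps $\to$ toggles $\to$ steps (each an $O(1)$ multiplicative factor, using the state-independent toggle rate above) yields $O(N \log N)$ Gibbs steps, and since $\sigma$ re-randomizes within $O(1)$ toggles this also mixes the layer, giving $t_{\mathrm{mix}} = O(n \log n)$.

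The main obstacle is that one cannot simply path-couple the raw single-site chain: the natural fully-synchronized coupling (same site $s$, same coin) is \emph{not} contracting in symmetric distance. The difficulty appears precisely when the two coupled chains sit in different layers (different $\sigma$): at a shared site one chain performs a free uniform resample while the other is forced to hold, so with probability $1/2$ the free chain manufactures a fresh disagreement the forced chain cannot match. A direct one-step drift computation of the symmetric difference $\delta$ in the mismatched-layer configuration is positive for small $\delta$, so no additive potential of the form $\delta + c\,\mathbf{1}[\text{layers differ}]$ is monotone. Routing through the embedded Bernoulli--Laplace walk sidesteps this entirely, because within a single fixed layer the swap coupling genuinely contracts; the only remaining care is the routine step$\leftrightarrow$toggle time-change and checking that laziness and the fast re-randomization of $\sigma$ leave the $O(N \log N)$ order unchanged.
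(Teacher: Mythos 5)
Your proposal is correct in its essentials, but it takes a genuinely different route from the paper. The paper's proof never couples anything: it constructs a \emph{strong stationary time} via an ad hoc coloring scheme (variables are marked ``black'' as they are resampled, with a re-coloring rule engineered so that, by induction, the black variables are uniformly distributed over all assignments consistent with their running sum), handles the two layers $\mathbf{1}^T X = \pm 1$ by flipping a fair coin between the two polarities of the construction, and finishes with the coupon-collector bound and the inequality $t_{\mathrm{mix}} \le 4\,\Exv{T^*}$. Your route instead exploits the same structural fact you identified---that the chain is frozen except at ``toggles,'' which occur at the state-independent rate $\frac{N+1}{4N}$ per step---to factor the process into an i.i.d.\ Bernoulli toggle clock times an embedded lazy Bernoulli--Laplace swap walk on $(N+1)/2$-subsets, which you then handle by path coupling ($1 - \Theta(1/N)$ contraction, diameter $\Theta(N)$, hence $O(N\log N)$ swaps) before undoing the time change. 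Each approach buys something: the paper's argument is fully self-contained and elementary (induction plus coupon collector, no path-coupling machinery, and it produces an exactly stationary state at a stopping time), while yours is modular, plugs into classical results on the Bernoulli--Laplace model, and your diagnosis of why naive path coupling on the raw chain fails across layers is a genuine insight that implicitly explains why the paper needed a non-coupling device in the first place. One point in your sketch deserves tightening: it is not quite right that ``$\sigma$ re-randomizes within $O(1)$ toggles,'' since $\sigma$ alternates \emph{deterministically} with each toggle; what actually randomizes the layer is the parity of the binomial toggle count $K_T$, whose bias $\Abs{\Prob{K_T \text{ even}} - \frac{1}{2}} = \frac{1}{2}\Abs{1-2p}^T$ (with $p = \frac{N+1}{4N}$) decays exponentially in $T$---equivalently, when coupling two chains started in opposite layers you must first offset their toggle clocks by one toggle before synchronizing them. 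This is a fixable looseness, not a gap: conditioning on $K_T$ and splitting by parity, your mixture over layers converges to the half-half stationary split, and the rest of your time-change argument goes through as stated.
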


\subsection{Proofs of Bias Results}

First, we restate and
prove Claim \ref{claimHogwildGeneralBiasBigO}.  This proof will use
the result of Theorem \ref{thmHogwildGeneralBias}, which we will prove
subsequently.  We note here that the use of a convex upper bound for the
sparse estimation time of the sequential chain (as opposed to using the
sequential chain's sparse estimation time directly) is an unfortunate
consequence of the proof---we hope that a more careful analysis could remove
it or replace it with a more natural condition.

\claimHogwildGeneralBiasBigO*

\begin{proof}

  First, note that, since $\alpha = O(1)$, we know by the definition
  of big-$O$ notation that for some $\alpha^*$, for all models
  in the class, the total influence of that model will be
  $\alpha \le \alpha^*$.
  Similarly, since we assumed that, for any $\epsilon$ and across
  all models $\pi$,
  \[
    \bar t_{\mathrm{SM-seq}(\omega)}(\pi, \epsilon) = O(n),
  \]
  then for each $\epsilon$, there must exist a $c(\epsilon)$
  such that for any distribution $\pi$ with $n$ variables in the class,
  \[
    t_{\mathrm{SM-seq}(\omega)}(\pi, \epsilon) \le n \cdot c(\epsilon).
  \]

  For some error $\epsilon$ and model $\pi$, we would like to apply Theorem
  \ref{thmHogwildGeneralBias} to bound its mixing time.  In order to apply
  the theorem, we must satisfy the conditions on $\epsilon$: it suffices for
  \[
    n
    \ge
    \frac{2 \omega \alpha^* \tau c(\epsilon / 2)}{\epsilon}
    \exp\left( (\alpha^* - 1)_+ c(\epsilon / 2) \right).
  \]
  Under this condition, applying the theorem allows us to bound the
  $\omega$-sparse estimation time of the \hogwild chain with
  \begin{dmath*}
    t_{\mathrm{SE-hog}(\omega)}(\epsilon)
    \le
    \left\lceil
    \bar t_{\mathrm{SE-seq}(\omega)}(\epsilon)
    +
    \frac{
      2 \omega \alpha^* \tau c(\epsilon / 2)^2
    }{
      \epsilon
    }
    \exp\left((\alpha^* - 1)_+ c(\epsilon / 2) \right)
    \right\rceil
    \le
    \bar t_{\mathrm{SE-seq}(\omega)}(\epsilon)
    +
    \frac{
      2 \omega \alpha^* \tau c(\epsilon / 2)^2
    }{
      \epsilon
    }
    \exp\left((\alpha^* - 1)_+ c(\epsilon / 2) \right)
    +
    1
  \end{dmath*}
  Therefore, if we define
  \[
    N(\epsilon)
    =
    \frac{2 \omega \alpha^* \tau c(\epsilon / 2)}{\epsilon}
    \exp\left( (\alpha^* - 1)_+ c(\epsilon / 2) \right),
  \]
  and
  \[
    T(\epsilon)
    =
    \frac{
      2 \omega \alpha^* \tau c(\epsilon / 2)^2
    }{
      \epsilon
    }
    \exp\left((\alpha^* - 1)_+ c(\epsilon / 2) \right)
    +
    1,
  \]
  then it follows that, for any $\epsilon$ and for all models with
  $n \ge N(\epsilon)$,
  \[
    t_{\mathrm{SM-hog}(\omega)}(\epsilon)
    \le
    t_{\mathrm{SM-seq}(\omega)}(\epsilon)
    +
    T(\epsilon).
  \]
  This is equivalent to saying that, for any $\epsilon$ and across all models,
  \[
    t_{\mathrm{SM-hog}(\omega)}(\epsilon)
    \le
    t_{\mathrm{SM-seq}(\omega)}(\epsilon)
    +
    O(1).
  \]
  This proves the claim.
\end{proof}

Next, we restate and prove the bias lemma, Lemma \ref{lemmaHogwildBiasRFive}.

\lemmaHogwildBiasRFive*

\begin{proof}[Proof of Lemma \ref{lemmaHogwildBiasRFive}]
  We start by using the primary result from Lemma \ref{lemmaHogwildLocalBias}.
  This result states that we can construct a coupling $(X_t, Y_t)$ of the
  \hogwild and sequential chains starting at any initial distributions $X_0$
  and $Y_0$ such that at any time $t$,
  \[
    \max_i
    \Prob{X_{i,t+1} \ne Y_{i,t+1}}
    \le
    \left( 1 - \frac{1 - \alpha}{n} \right)
    \max_i
    \Prob{X_{i,t} \ne Y_{i,t}}
    +
    \frac{\alpha \tau}{n^2}.
  \]
  Now, for any initial distribution $\mu_0$, \crcchange{assume that we start
  with $X_0 = Y_0$, where both are distributed according to $\mu_0$}.  Then,
  trivially,
  \[
    \Prob{X_{i,0} \ne Y_{i,0}} = 0.
  \]
  It follows from recursive application of the sub-result of Lemma
  \ref{lemmaHogwildLocalBias} that, for this coupling,
  \begin{dmath*}
    \max_i
    \Prob{X_{i,t} \ne Y_{i,t}}
    \le
    \sum_{k=0}^{t-1}
    \left( 1 + \frac{\alpha - 1}{n} \right)^k
    \frac{\alpha \tau}{n^2}
    \le
    t \left( 1 + \frac{(\alpha - 1)_+}{n} \right)^t
    \frac{\alpha \tau}{n^2}
    \le
    \exp\left( \frac{(\alpha - 1)_+}{n} t \right)
    \frac{\alpha \tau t}{n^2},
  \end{dmath*}
  where $(x)_+$ denotes $\max(0, x)$.
  It follows by the union bound that, for any set
  of variables $I$ with $\Abs{I} \le \omega$, the probability that the coupling
  is unequal in at least one of those variables is
  \begin{dmath*}
    \Prob{\exists i \in I, \: X_{i,t} \ne Y_{i,t}}
    \le
    \omega
    \max_i
    \Prob{X_{i,t} \ne Y_{i,t}}
    \le
    \exp\left( \frac{(\alpha - 1)_+}{n} t \right)
    \frac{\omega \alpha \tau t}{n^2}.
  \end{dmath*}
  Since this inequality holds for any set of variable $I$ with
  $\Abs{I} \le \omega$, it follows that
  \begin{dmath*}
    \max_{I \subseteq \{1,\ldots,n\}, \, \Abs{I} \le \omega}
    \Prob{\exists i \in I, \: X_{i,t} \ne Y_{i,t}}
    \le
    \omega
    \max_i
    \Prob{X_{i,t} \ne Y_{i,t}}
    \le
    \exp\left( \frac{(\alpha - 1)_+}{n} t \right)
    \frac{\omega \alpha \tau t}{n^2}.
  \end{dmath*}
  We can proceed to apply Lemma \ref{lemmaSparseCoupling}, which lets us
  conclude that
  \begin{dmath*}
    \svdist{
      P^{(t)}_{\mathrm{hog}} \mu_t
      -
      P^{(t)}_{\mathrm{seq}} \nu_t
    }{\omega}
    \le
    \omega
    \max_i
    \Prob{X_{i,t} \ne Y_{i,t}}
    \le
    \exp\left( \frac{(\alpha - 1)_+}{n} t \right)
    \frac{\omega \alpha \tau t}{n^2}
  \end{dmath*}
  This is the desired result.
\end{proof}

Next, we restate and prove
the full bias result, Theorem \ref{thmHogwildGeneralBias}.

\thmHogwildGeneralBias*

\begin{proof}[Proof of Theorem \ref{thmHogwildGeneralBias}]
  We start with the result of Lemma \ref{lemmaHogwildBiasRFive}, which
  lets us conclude that
  \[
    \svdist{\mu_t - \nu_t}{\omega}
    \le
    \exp\left( \frac{(\alpha - 1)_+}{n} t \right)
    \frac{\omega \alpha \tau t}{n^2}
  \]
  \crcchange{where $\mu_t = P^{(t)}_{\mathrm{hog}} \mu_0$ and
  $\nu_t = P^{(t)}_{\mathrm{seq}} \mu_0$ are the distributions of the
  \hogwild and
  sequential Gibbs sampling chains}, respectively, starting in state $\mu_0$.
  Next, since $\nu_t$ has the dynamics of the sequential Gibbs sampling chain,
  and $\bar t_{\mathrm{SM-seq}(\omega)}(\epsilon)$ is an upper bound for
  the sparse estimation time, it follows that for any $\epsilon$, if
  \[
    t \ge \bar t_{\mathrm{SM-seq}(\omega)}(\epsilon),
  \]
  then
  \[
    \svdist{\nu_t - \pi}{\omega}
    \le
    \epsilon.
  \]
  Since $\bar t_{\mathrm{SM-seq}(\omega)}(\epsilon)$ is a decreasing
  function of $\epsilon$, it must have an inverse function.  Furthermore, since
  it is convex, its inverse function must also be convex.
  Therefore, we can also write the above expression in terms of the inverse
  function; for any $t$,
  \[
    \svdist{\nu_t - \pi}{\omega}
    \le
    \bar t_{\mathrm{SM-seq}(\omega)}^{-1}(t).
  \]
  Therefore, by the triangle inequality, for any $t$,
  \begin{dmath*}
    \svdist{\mu_t - \pi}{\omega}
    \le
    \svdist{\mu_t - \nu_t}{\omega}
    +
    \svdist{\nu_t - \pi}{\omega}
    \le
    \frac{\omega \alpha \tau t}{n^2}
    \exp\left( \frac{(\alpha - 1)_+}{n} t \right)
    +
    \bar t_{\mathrm{SM-seq}(\omega)}^{-1}(t).
  \end{dmath*}
  Now, for any particular $\epsilon$, let
  \[
    t_0 = \bar t_{\mathrm{SM-seq}(\omega)}(\epsilon),
  \]
  and let
  \[
    t_1 = \bar t_{\mathrm{SM-seq}(\omega)}\left( \frac{\epsilon}{2} \right).
  \]
  Further define
  \[
    R
    =
    \frac{\omega \alpha \tau t_1}{n^2}
    \exp\left( \frac{(\alpha - 1)_+}{n} t_1 \right).
  \]
  Therefore, for any $t_0 \le t \le t_1$,
  \begin{dmath*}
    \svdist{\mu_t - \pi}{\omega}
    \le
    R
    +
    \bar t_{\mathrm{SM-seq}(\omega)}^{-1}(t).
  \end{dmath*}
  By convexity of $\bar t_{\mathrm{SM-seq}(\omega)}^{-1}$, we can bound this
  expression over the interval $t_0 \le t \le t_1$ with
  \begin{dmath*}
    \svdist{\mu_t - \pi}{\omega}
    \le
    R
    +
    \frac{t_1 - t}{t_1 - t_0} \cdot \epsilon
    +
    \frac{t - t_0}{t_1 - t_0} \cdot \frac{\epsilon}{2},
  \end{dmath*}
  and so, if we want this to be less than $\epsilon$, it suffices to choose
  $t$ such that
  \[
    \epsilon
    =
    R
    +
    \frac{t_1 - t}{t_1 - t_0} \cdot \epsilon
    +
    \frac{t - t_0}{t_1 - t_0} \cdot \frac{\epsilon}{2}
  \]
  which will occur when
  \[
    t
    =
    t_0 + \frac{2 R (t_1 - t_0)}{\epsilon}.
  \]
  Now, applying the definition
  \[
    c
    =
    \frac{1}{n}
    \bar t_{\mathrm{SM-seq}(\omega)}\left(\frac{\epsilon}{2}\right)
    =
    \frac{t_1}{n}
  \]
  lets us equivalently write $R$ as
  \[
    R
    =
    \frac{\omega \alpha \tau c}{n}
    \exp\left( \frac{(\alpha - 1)_+}{n} t_1 \right).
  \]
  Recall that as a condition for the theorem, we assumed that
  \[
    \epsilon
    \ge
    \frac{2 \omega \alpha \tau c}{n}
    \exp\left( \frac{(\alpha - 1)_+}{n} t_1 \right).
  \]
  It follows from this and our expression for $R$ that
  \[
    R \le \frac{\epsilon}{2}.
  \]
  Therefore this assignment of $t$ will satisfy the previous constraint that
  $t_0 \le t \le t_1$, and so for this assignment of $t$, and for any initial
  distribution $\mu_0$, it holds that
  \begin{dmath*}
    \svdist{\mu_t - \pi}{\omega}
    \le
    \epsilon.
  \end{dmath*}
  Therefore, by the definition of sparse estimation time, the sparse estimation
  time of the \hogwild chain will be
  \[
    t_{\mathsf{SE-hog}(\omega)}(\epsilon)
    \le
    t,
  \]
  for this assignment of $t$.  Now, recall that above we assigned
  \[
    t
    =
    t_0 + \frac{2 R (t_1 - t_0)}{\epsilon}.
  \]
  Under this condition, we can bound this whole error term as
  \[
    \frac{2 R (t_1 - t_0)}{\epsilon}
    \le
    \frac{2 \omega \alpha \tau t_1^2}{n^2 \epsilon}
    \exp\left( \frac{(\alpha - 1)_+}{n} t_1 \right).
  \]
  Combining this with the definitions of $t_0$ and $c$ lets us state that
  \[
    t
    \le
    \bar t_{\mathrm{SM-seq}(\omega)}(\epsilon)
    +
    \frac{
      2 \omega \alpha \tau c^2
    }{
      \epsilon
    }
    \exp\left((\alpha - 1)_+ c \right).
  \]
  Taking the ceiling implies that, when
  \[
    t
    =
    \left\lceil
      \bar t_{\mathrm{SM-seq}(\omega)}(\epsilon)
      +
      \frac{
        2 \omega \alpha \tau c^2
      }{
        \epsilon
      }
      \exp\left((\alpha - 1)_+ c \right)
    \right\rceil,
  \]
  for any initial distribution $\mu_0$,
  \[
    \svdist{\mu_t - \pi}{\omega} \le \epsilon.
  \]
  \crcchange{
  Since we above defined $\mu_t$ to be the distribution of \hogwild Gibbs
  after $t$ timesteps, $\mu_t = P^{(t)} \mu_0$, where $P^{(t)}$ is the
  transition matrix of \hogwild Gibbs after $t$ timesteps.  We can 
  thus equivalently write this as}
  \[
    \svdist{P^{(t)} \mu_0 - \pi}{\omega} \le \epsilon.
  \]
  Therefore, by the definition of sparse estimation time,
  \[
    t_{\mathsf{SE-hog}}(\epsilon)
    \le
    \left\lceil
      \bar t_{\mathrm{SM-seq}(\omega)}(\epsilon)
      +
      \frac{
        2 \omega \alpha \tau c^2
      }{
        \epsilon
      }
      \exp\left((\alpha - 1)_+ c \right)
    \right\rceil.
  \]
  This proves the theorem.
\end{proof}

Next, we restate and prove the
theorem that bounds the sparse estimation time of sequential
Gibbs for distributions that satisfy Dobrushin's condition.

\thmSequentialLocalBias*

\begin{proof}[Proof of Theorem \ref{thmSequentialLocalBias}]
  We start by using the result of Lemma \ref{lemmaSequentialLocalPB}.  This
  result states that, for any initial distributions $(X_0, Y_0)$, there exists
  a coupling $(X_t, Y_t)$ of the sequential Gibbs sampling chains starting at
  distributions $X_0$ and $Y_0$, respectively, such that for any variable $i$
  and any time $t$,
  \[
    \Prob{X_{i,t} \ne Y_{i,t}}
    \le
    \exp\left( - \frac{1 - \alpha}{n} t \right).
  \]
  It follows by the union bound that, for any set of variables $I$ with
  $\Abs{I} \le \omega$, the
  probability that the coupling is unequal in at least one of those variables
  is
  \[
    \Prob{\exists i \in I, \: X_{i,t} \ne Y_{i,t}}
    \le
    \omega
    \exp\left( - \frac{1 - \alpha}{n} t \right).
  \]
  Since this inequality holds for any set of variable $I$ with
  $\Abs{I} \le \omega$, it follows that
  \begin{dmath*}
    \max_{I \subseteq \{1,\ldots,n\}, \, \Abs{I} \le \omega}
    \Prob{\exists i \in I, \: X_{i,t} \ne Y_{i,t}}
    \le
    \omega
    \exp\left( - \frac{1 - \alpha}{n} t \right).
  \end{dmath*}
  We can proceed to apply Lemma \ref{lemmaSparseCoupling}, which lets us
  conclude that, \crcchange{if we let $\mu_t$ and $\nu_t$ denote the
  distributions of $X_t$ and $Y_t$, respectively, then}
  \begin{dmath*}
    \svdist{\mu_t - \nu_t}{\omega}
    \le
    \omega
    \exp\left( - \frac{1 - \alpha}{n} t \right).
  \end{dmath*}
  Since this was true for any initial distributions for $X_0$ and $Y_0$,
  it will hold in particular for $Y_0$ distributed according to $\pi$, the
  stationary distribution of the chain.  In this case, $\nu_t = \pi$, and 
  so for any initial distribution $\mu_0$ for $X_0$,
  \begin{dmath*}
    \svdist{\mu_t - \pi}{\omega}
    \le
    \omega
    \exp\left( - \frac{1 - \alpha}{n} t \right).
  \end{dmath*}
  Now, in order for this to be bounded by $\epsilon$, it suffices to choose
  $t$ such that
  \[
    \omega \exp\left( - \frac{1 - \alpha}{n} t \right)
    \le
    \epsilon.
  \]
  This will occur whenever
  \[
    t
    \ge
    \frac{n}{1 - \alpha}
    \log\left(\frac{\omega}{\epsilon}\right)
  \]
  (here we used the fact that $\alpha < 1$ to do the division).
  Taking the ceiling, we can conclude that when
  \[
    t
    =
    \left\lceil
    \frac{n}{1 - \alpha}
    \log\left(\frac{\omega}{\epsilon}\right)
    \right\rceil.
  \]
  for any initial distribution $\mu_0$,
  \[
    \svdist{\mu_t - \pi}{\omega} \le \epsilon.
  \]
  \crcchange{
  Since we defined $\mu_t$ to be the distribution of $X_t$, it must hold that
  $\mu_t = P^{(t)} \mu_0$, where $\mu_0$ is the initial distribution of $X_0$,
  and $P^{(t)}$ is the transition matrix associated with
  running $t$ steps of sequential Gibbs sampling.  Thus, we can rewrite this
  as}
  \[
    \svdist{P^{(t)} \mu_0 - \pi}{\omega} \le \epsilon.
  \]
  Since this result held for any initial assignment of $X_0$ and therefore for
  any $\mu_0$, by the definition of sparse estimation time it follows that
  \[
    t_{\mathsf{SE-seq}}(\epsilon)
    \le
    \left\lceil
    \frac{n}{1 - \alpha}
    \log\left(\frac{\omega}{\epsilon}\right)
    \right\rceil.
  \]
  This proves the theorem.
\end{proof}

Next, we restate and prove
the theorem that bounds the sparse estimation time of \hogwild
Gibbs for distributions that satisfy Dobrushin's condition.

\thmHogwildLocalBias*

\begin{proof}[Proof of Theorem \ref{thmHogwildLocalBias}]
  We start by using the secondary result from
  Lemma \ref{lemmaHogwildLocalBias}---we can safely use this result because
  we assumed the chain satisfied Dobrushin's condition ($\alpha < 1$).
  This result states that we can construct a coupling $(X_t, Y_t)$ of the
  \hogwild and sequential chains starting at any initial distributions $X_0$
  and $Y_0$ such that at any time $t$,
  \[
    \Prob{X_{i,t} \ne Y_{i,t}}
    \le
    \exp\left( - \frac{1 - \alpha}{n} t \right)
    +
    \frac{\alpha \tau}{(1 - \alpha) n}.
  \]
  It follows by the union bound that, for any set of variables $I$ with
  $\Abs{I} \le \omega$, the
  probability that the coupling is unequal in at least one of those variables
  is
  \[
    \Prob{\exists i \in I, \: X_{i,t} \ne Y_{i,t}}
    \le
    \omega \exp\left( - \frac{1 - \alpha}{n} t \right)
    +
    \frac{\omega \alpha \tau}{(1 - \alpha) n}
  \]
  Since this inequality holds for any set of variable $I$ with
  $\Abs{I} \le \omega$, it follows that
  \begin{dmath*}
    \max_{I \subseteq \{1,\ldots,n\}, \, \Abs{I} \le \omega}
    \Prob{\exists i \in I, \: X_{i,t} \ne Y_{i,t}}
    \le
    \omega \exp\left( - \frac{1 - \alpha}{n} t \right)
    +
    \frac{\omega \alpha \tau}{(1 - \alpha) n}.
  \end{dmath*}
  We can proceed to apply Lemma \ref{lemmaSparseCoupling}, which lets us
  conclude that, \crcchange{if we let $\mu_t$ and $\nu_t$ denote the
  distributions of $X_t$ and $Y_t$ respectively,}
  \begin{dmath*}
    \svdist{\mu_t - \nu_t}{\omega}
    \le
    \omega \exp\left( - \frac{1 - \alpha}{n} t \right)
    +
    \frac{\omega \alpha \tau}{(1 - \alpha) n}.
  \end{dmath*}
  To bound the sparse estimation time, notice that for any fixed $\epsilon$
  (independent of $n$), in order to achieve
  \[
    \svdist{\mu_t - \pi}{\omega} \le \epsilon,
  \]
  it suffices to choose any $t$ such that
  \[
    \omega \exp\left( - \frac{1 - \alpha}{n} t \right)
    \le
    \epsilon
    -
    \frac{\omega \alpha \tau}{(1 - \alpha) n}.
  \]
  This will occur when
  \[
    \frac{1 - \alpha}{n} t
    \ge
    \log\left( \frac{\omega}{\epsilon} \right)
    -
    \log\left(
      1
      -
      \frac{\omega \alpha \tau}{(1 - \alpha) n \epsilon}
    \right).
  \]
  Next, recall that we assumed that
  \[
    \epsilon \ge \frac{2 \omega \alpha \tau}{(1 - \alpha) n};
  \]
  therefore $\epsilon$ is large enough that
  \[
    \frac{\omega \alpha \tau}{(1 - \alpha) n \epsilon} \le \frac{1}{2}.
  \]
  It is easy to prove that, for all $x \le \frac{1}{2}$,
  \[
    \log(1 - x) \ge 2x.
  \]
  Therefore, under this condition in $\epsilon$, it suffices to choose $t$ such
  that
  \[
    \frac{1 - \alpha}{n} t
    \ge
    \log\left( \frac{\omega}{\epsilon} \right)
    +
    \frac{2 \omega \alpha \tau}{(1 - \alpha) n \epsilon};
  \]
  this will occur whenever
  \[
    t
    \ge
    \frac{n}{1 - \alpha}
    \log\left( \frac{\omega}{\epsilon} \right)
    +
    \frac{2 \omega \alpha \tau}{(1 - \alpha)^2 \epsilon}.
  \]
  Taking the ceiling implies that, when
  \[
    t
    =
    \left\lceil
      \frac{n}{1 - \alpha}
      \log\left( \frac{\omega}{\epsilon} \right)
      +
      \frac{2 \omega \alpha \tau}{(1 - \alpha)^2 \epsilon}
    \right\rceil,
  \]
  for any initial distribution $\mu_0$,
  \[
    \svdist{\mu_t - \pi}{\omega} \le \epsilon.
  \]
  \crcchange{
  Since we defined $\mu_t$ above to be the distribution of $X_t$, it follows
  that $\mu_t = P^{(t)} \mu_0$, where $\mu_0$ is the initial distribution of
  $X_0$ and $P^{(t)}$ is the transition matrix associated with running $t$
  steps of \hogwild Gibbs.  Therefore, we can rewrite this as
  }
  \[
    \svdist{P^{(t)} \mu_0 - \pi}{\omega} \le \epsilon.
  \]
  Since this is true for any initial distribution of $X_0$ and therefore for
  any $\mu_0$, it follows from the definition of sparse estimation time that
  \[
    t_{\mathsf{SE-hog}}(\epsilon)
    \le
    \left\lceil
      \frac{n}{1 - \alpha}
      \log\left( \frac{\omega}{\epsilon} \right)
      +
      \frac{2 \omega \alpha \tau}{(1 - \alpha)^2 \epsilon}
    \right\rceil.
  \]
  This proves the theorem.
\end{proof}

\subsection{Proofs of Mixing Time Results}

First, we restate and prove Statement \ref{stmtBadMixExample}.

\stmtBadMixExample*

\begin{proof}[Proof of Statement \ref{stmtBadMixExample}]
  We start out by proving that the model mixes rapidly in the sequential
  case.

  First, we assume that we select $M_1$ large enough that, even for potentially
  exponential run times, the dynamics of the chain are indistinguishable 
  from the chain with $M_1 = \infty$.  In particular, this alternate chain
  will have the following properties:
  \begin{itemize}
    \item The dynamics of the $X$ part of the chain do not depend in any way
      on the value of $Y$.
    \item If at any point, $\Abs{\mathbf{1}^T X} > 1$, whenever we sample an
      $X$ variable, we will re-sample it if possible to decrease the value
      of $\Abs{\mathbf{1}^T X}$ with probability $1$.
    \item As long as $\Abs{\mathbf{1}^T X} = 1$ at some point in time, this
      will remain true, and the dynamics of the $X$ part of the chain will
      be those of the chain described in Lemma \ref{lemmaModelX}.
  \end{itemize}
  We assume that we choose $M_1$ large enough that these properties hold over
  all time windows discussed in this proof with high probability.

  Now, by the coupon collector's problem, after $O(N \log N)$ timesteps,
  we have sampled all the variables with high probability.  If we have sampled
  all the variables with high probability, then we will certainly have
  $\Abs{\mathbf{1}^T X} = 1$ with high probability.

  Once we have $\Abs{\mathbf{1}^T X} = 1$, Lemma \ref{lemmaModelX} ensures
  that, after $O(N \log N)$ additional timesteps, the $X$ part of the chain
  will be close to its stationary distribution.

  Meanwhile, while $\Abs{\mathbf{1}^T X} = 1$, the dynamics of the $Y$ part
  of the chain are exactly Gibbs sampling over the model with energy
  \[
    \phi_Y(Y)
    =
    \frac{\beta}{N} \left( \mathbf{1}^T Y \right)^2.
  \]
  For any $\beta < 1$, this is known to mix in $O(N \log N)$ time, since it
  satisfies Dobrushin's condition.  Therefore, after $O(N \log N)$ steps
  after we have $\Abs{\mathbf{1}^T X} = 1$, the $Y$ part of the chain will
  also be close to its stationary distribution.

  Summing up the times for the above events gives us a total mixing time for
  this chain of 
  \[
    t_{\mathrm{mix-seq}} = O(N \log N).
  \]

  Next we prove that the model takes a potentially exponential time to mix
  in the asynchronous case.  Assume here that our model of execution has
  two threads, which always either sample two $X$ variables independently
  and asynchronously, or sample a single $Y$ variable synchronously (i.e.
  there is never any delay when reading the value of a $Y$ variable).
  For this execution pattern, we have uniformly that
  $\tau_{i,t} \le 1$.  In particular, this has $\tau = O(1)$.

  Now, consider the case where the two threads each choose to sample a variable
  in $X$ that can be switched.  Since at least $\frac{1}{4}$ of the variables
  are variables in $X$ that can be switched, this will occur with probability
  at least $\frac{1}{16}$.
  Given this, they will each independently switch their variable with
  probability $\frac{1}{2}$.  This means that both variables are switched
  with probability $\frac{1}{4}$ --- but this would place the system in a
  state where
  \[
    \Abs{\mathbf{1}^T X} > 1.
  \]
  At any time when $\Abs{\mathbf{1}^T X} = 1$, this will occur with probability
  $\frac{1}{64}$, which implies that whenever we sample $Y$, the probability
  that $\Abs{\mathbf{1}^T X} > 1$ is at least $\frac{1}{64}$.

  Now, assume without loss of generality that we initialize $Y$ such that
  $\mathbf{1}^T Y = N$.  Let $\rho_t$ denote the value of $\mathbf{1}^T Y$ at
  time $t$.  Assuming that we sample a variable $Y_i$ with value $1$,
  while $\Abs{\mathbf{1}^T X} = 1$,
  the probability that it will be switched will be
  \begin{dmath*}
    \Prob{\text{value switched}}
    =
    \frac{
      \exp\left( \beta n^{-1} (\rho_t - 1)^2 \right)
    }{
      \exp\left( \beta n^{-1} (\rho_t - 1)^2 \right)
      +
      \exp\left( \beta n^{-1} (\rho_t)^2 \right)
    }
    =
    \left(
      1
      +
      \exp\left( \beta n^{-1} \left( (\rho_t)^2 - (\rho_t - 1)^2 \right) \right)
    \right)^{-1}
    =
    \left(
      1
      +
      \exp\left( \beta n^{-1} ( 2 \rho_t - 1 ) \right)
    \right)^{-1}.
  \end{dmath*}
  Note that since $\rho_t \le N$ at all times, if $\beta < 1$,
  \[
    \beta N^{-1} ( 2 \rho_t - 1 )
    \le
    2.
  \]
  We also can verify that, for any $0 \le x \le 2$, as a basic property 
  of the exponential function,
  \[
    \left( 1 + \exp(x) \right)^{-1}
    \le
    \frac{1}{2} - \frac{x}{6}.
  \]
  Therefore, as long as $\rho_t > 0$,
  \begin{dmath*}
    \Prob{\text{value switched}}
    \le
    \frac{1}{2}
    -
    \frac{\beta \rho_t}{3 n}.
  \end{dmath*}
  Therefore, as long as $\rho_t > 0$, and $\Abs{\mathbf{1}^T X} = 1$,
  \begin{dmath*}
    \Exvc{\rho_{t+1}}{\F_t}
    \ge
    \rho_t
    +
    2 \left(
      \frac{N - \rho_t}{2 N}
      -
      \frac{1}{2}
      +
      \frac{\beta \rho_t}{3 N}
    \right)
    =
    \rho_t
    +
    2 \left(
      \frac{-\rho_t}{2 N}
      +
      \frac{\beta \rho_t}{3 N}
    \right)
    =
    \rho_t \left(
      1
      -
      \frac{3 - 2 \beta}{3 N}
    \right).
  \end{dmath*}
  On the other hand, if $\Abs{\mathbf{1}^T X} > 1$, then we can pick $M_2$
  large enough such that with high probability, as long as $\rho_t > 0$,
  all variables $Y_i$ are always sampled to be $1$.  In this case,
  \begin{dmath*}
    \Exvc{\rho_{t+1}}{\F_t}
    \ge
    \rho_t
    +
    2 \left(
      \frac{N - \rho_t}{2 N}
    \right)
    =
    \rho_t
    \left(
      1
      -
      \frac{1}{N}
    \right)
    +
    1.
  \end{dmath*}
  In general, since $\Abs{\mathbf{1}^T X} > 1$ with probability at least
  $\frac{1}{64}$,
  \begin{dmath*}
    \Exvc{\rho_{t+1}}{\F_t}
    \ge
    \left(
      1
      -
      \frac{1}{64}
    \right)
    \rho_t \left(
      1
      -
      \frac{3 - 2 \beta}{3 N}
    \right)
    +
    \frac{1}{64}
    \left(
      \rho_t
      \left(
        1
        -
        \frac{1}{N}
      \right)
      +
      1
    \right)
    =
    \rho_t \left(
      1
      -
      \left(
        1
        -
        \frac{1}{64}
      \right)
      \frac{3 - 2 \beta}{3 N}
      -
      \frac{1}{64 N}
    \right)
    +
    \frac{1}{64}
    =
    \rho_t \left(
      1
      -
      \frac{1}{N}
      +
      \left(
        1
        -
        \frac{1}{64}
      \right)
      \frac{2 \beta}{3 N}
    \right)
    +
    \frac{1}{64}
    \ge
    \rho_t \left(
      1
      -
      \frac{1}{N}
    \right)
    +
    \frac{1}{64}
  \end{dmath*}
  This expression has fixed point
  \begin{dmath*}
    \rho^*
    =
    \frac{N}{64}.
  \end{dmath*}
  Since $\rho$ is written as a sum of independent samples, as long as
  $\rho > 0$, the distribution of $\rho$ is going to be exponentially
  concentrated around its expected value, which we have just shown is
  at least $\frac{N}{64}$.  It follows that it is exponentially unlikely
  to ever achieve a value of $\rho$ that is not positive.  By the union bound,
  there is some $t = \exp(\Omega(N))$ such that, after $t$ timesteps,
  $\rho_t > 0$ with high probability.

  But, the actual probability that $\rho > 0$ in the stationary distribution is
  exactly $\frac{1}{2}$, by symmetry.  It follows that the mixing time for the
  \hogwild chain
  must be greater than $t$; that is,
  \[
    t_{\mathrm{mix-hog}} \ge \exp(\Omega(N)).
  \]
  This finishes our proof of the statement.
\end{proof}

Next, we restate and prove Theorem \ref{thmSeqHogLocalMixing}.

\thmSeqHogLocalMixing*

\begin{proof}[Proof of First Part of Theorem \ref{thmSeqHogLocalMixing}]
  If we use the coupling from Lemma \ref{lemmaSequentialLocalPB}, then
  by the result of that lemma,
  \[
    \Prob{X_{i,t} \ne Y_{i,t}}
    \le
    \exp\left(- \frac{1 - \alpha}{n} t \right),
  \]
  It follows by the union bound that
  \[
    \Prob{X_t \ne Y_t}
    \le
    n \exp\left(- \frac{1 - \alpha}{n} t \right).
  \]
  Now, assume that we initialize $X_0$ with distribution $\mu_0$, and $Y_0$
  with the stationary distribution $\pi$.  
  By Proposition \ref{propCoupling}, since $X_t$ has distribution
  $P^{(t)} \mu_0$ and
  $Y_t$ has distribution $P^{(t)} \pi$, this is equivalent to saying
  \[
    \tvdist{P^{(t)} \mu_0 - P^{(t)} \pi}
    \le
    n \exp\left(- \frac{1 - \alpha}{n} t \right).
  \]
  Therefore, in order for
  \[
    \tvdist{P^{(t)} \mu_0 - P^{(t)} \pi} \le \epsilon,
  \]
  it suffices to choose $t$ such that
  \[
    \epsilon
    =
    n \exp\left(- \frac{1 - \alpha}{n} t \right).
  \]
  This occurs when
  \[
    t
    =
    \frac{n}{1 - \alpha} \log\left( \frac{n}{\epsilon} \right),
  \]
  which is the desired expression.
\end{proof}

\begin{proof}[Proof of Second Part of Theorem \ref{thmSeqHogLocalMixing}]
  If we use the coupling from Lemma \ref{lemmaHogwildLocalPB}, then
  by the result of that lemma,
  \[
    \Prob{X_{i,t} \ne Y_{i,t}}
    \le
    \exp\left(- \frac{1 - \alpha}{n + \alpha \tau^*} t \right),
  \]
  It follows by the union bound that
  \[
    \Prob{X_t \ne Y_t}
    \le
    n \exp\left(- \frac{1 - \alpha}{n + \alpha \tau^*} t \right).
  \]
  Next, recall that we assumed that our \hogwild-Gibbs sampler
  \crcchange{has target distribution $\pi$}.  
  Now, assume that we initialize $X_0$ with distribution $\mu_0$, and $Y_0$
  with the target distribution $\pi$.  
  By Proposition \ref{propCoupling}, since $X_t$ has distribution
  $P^{(t)} \mu_0$ and $Y_t$ has distribution $P^{(t)} \pi$, this is equivalent
  to saying
  \[
    \tvdist{P^{(t)} \mu_0 - P^{(t)} \pi}
    \le
    n \exp\left(- \frac{1 - \alpha}{n + \alpha \tau^*} t \right).
  \]
  Therefore, in order for
  \[
    \tvdist{P^{(t)} \mu_0 - P^{(t)} \pi} \le \epsilon,
  \]
  it suffices to choose $t$ such that
  \[
    \epsilon
    =
    n \exp\left(- \frac{1 - \alpha}{n + \alpha \tau^*} t \right).
  \]
  This occurs when
  \[
    t
    =
    \frac{n + \alpha \tau^*}{1 - \alpha} \log\left( \frac{n}{\epsilon} \right),
  \]
  which is the desired expression.
\end{proof}

Next, we restate and prove Statement \ref{stmtExperimentalUpperBound},
which says that our experimental strategy provides a valid upper bound on
the mixing time.

\stmtExperimentalUpperBound*

\begin{proof}[Proof of Statement \ref{stmtExperimentalUpperBound}]
  Consider the partial ordering of states in this Ising model defined by
  \[
    Y \preceq X \leftrightarrows \forall i, \, Y_i \le X_i.
  \]
  Next, consider the coupling procedure that, at each time $t$, chooses
  a random variable $\tilde I_t$ to sample and a random $\tilde R_t$ uniformly
  on $[0, 1]$.  It then computes $p_t$, the marginal probability of sampling
  the chosen variable as $1$, and assigns the variable as
  \[
    \text{new value of $X_{\tilde I_t}$}
    =
    \left\{
      \begin{array}{l l}
        1 & \text{if } \tilde R_t < p_t, \\
        0 & \text{otherwise}
      \end{array}
    \right..
  \]
  This sampling procedure is equivalent to the one that we use in the
  experiment, and it will produce a chain that is consistent with the
  Ising model's dynamics.

  If we consider the evolution of two coupled chains $X^{(t)}$ and $Y^{(t)}$
  using the same values of $\tilde I_t$ and $\tilde R_t$, then from the way
  that we constructed the coupling, it follows that if
  \[
    Y^{(0)} \preceq X^{(0)},
  \]
  then for any future time $t$,
  \[
    Y^{(t)} \preceq X^{(t)}.
  \]
  This is because if
  \[
    Y^{(t)} \preceq X^{(t)},
  \]
  then the marginal probability of assigning $1$ to any particular variable in
  $X$ is always no less than the marginal probability of assigning $1$ to
  the same variable in $Y$.

  Therefore, if we initialize all $X^{(0)}_i = 1$ and all $Y^{(0)}_i = -1$,
  and run the coupling until time $T_{\text{coupling}}$, the time at which
  \[
    Y^{(T_{\text{coupling}})} = X^{(T_{\text{coupling}})},
  \]
  then by the previous analysis, since for any chain $U$ initialized at any
  state $U^{(0)}$,
  \[
    Y^{(0)} \preceq U^{(0)} \preceq X^{(0)},
  \]
  it follows that
  \[
    Y^{(T_{\text{coupling}})}
    \preceq
    U^{(T_{\text{coupling}})}
    \preceq
    X^{(T_{\text{coupling}})},
  \]
  and so,
  \[
    Y^{(T_{\text{coupling}})}
    =
    U^{(T_{\text{coupling}})}
    =
    X^{(T_{\text{coupling}})}.
  \]
  Since this was true for any initial value of $U$, it follows that
  $T_{\text{coupling}}$ is a coupling time for any two initial values of the
  chain.  Therefore, by Corollary 5.3 from \citetsec{levin2009markov},
  \[
    \max_{\mu_0} \tvdist{P^{(t)} \mu_0 - \pi}
    \le
    \Prob{T_{\text{coupling}} > t}.
  \]
  If we use our definition of $\hat t(\epsilon)$ where
  \[
    \Prob{T_{\text{coupling}} > \hat t(\epsilon)} = \epsilon,
  \]
  then this implies that
  \[
    \max_{\mu_0} \tvdist{P^{(\hat t)} \mu_0 - \pi}
    \le
    \epsilon.
  \]
  This in turn implies that $\hat t$ is a upper bound on the mixing time,
  which is the desired result.
\end{proof}

\subsection{Proofs of Lemmas}

In this section, we will restate and prove the lemmas used earlier in the
appendix.

\lemmaSparseCoupling*

\begin{proof}[Proof of Lemma \ref{lemmaSparseCoupling}]
  For any set of variables $I \subset \{1,\ldots,n\}$, let $M_I(\mu)$ denote
  the marginal distribution of the variables in $I$ in the distribution $\mu$.
  In particular, $M_I$ includes all events $A$ that depend only on variables
  in set $I$.  Next, let $\bar X_I$ and $\bar Y_I$ denote the values of
  $\bar X$ and $\bar Y$ on those variables in $I$; this will be a coupling of
  the distributions $M_I(\mu)$ and $M_I(\nu)$. Therefore, by Proposition
  \ref{propCoupling},
  \[
    \svdist{M_I(\mu) - M_I(\nu)}{\omega}
    \le
    \Prob{\bar X_I \ne \bar Y_I}
    =
    \Prob{\exists i \in I, \: \bar X_i \ne \bar Y_i}.
  \]
  Let $\Omega_I$ denote all events in the original probability space $\Omega$
  that depend only on the variables in $I$.
  By the definition of total variation distance,
  \[
    \svdist{M_I(\mu) - M_I(\nu)}{\omega}
    =
    \max_{A \in \Omega_I}
    \Abs{\mu(A) - \nu(A)}.
  \]
  Therefore,
  \[
    \max_{A \in \Omega_I}
    \Abs{\mu(A) - \nu(A)}
    \le
    \Prob{\exists i \in I, \: \bar X_i \ne \bar Y_i}.
  \]
  Now, since this was true for any $I$, it is certainly true if we maximize
  both sides over all $I$ with $\Abs{I} \le \omega$.  Therefore,
  \[
    \max_{I \subseteq \{1,\ldots,n\}, \, \Abs{I} \le \omega}
    \max_{A \in \Omega_I}
    \Abs{\mu(A) - \nu(A)}
    \le
    \max_{I \subseteq \{1,\ldots,n\}, \, \Abs{I} \le \omega}
    \Prob{\exists i \in I, \: \bar X_i \ne \bar Y_i}.
  \]
  The left side can be reduced to
  \[
    \max_{\Abs{A} \le \omega}
    \Abs{\mu(A) - \nu(A)}
    \le
    \max_{I \subseteq \{1,\ldots,n\}, \, \Abs{I} \le \omega}
    \Prob{\exists i \in I, \: \bar X_i \ne \bar Y_i}
  \]
  and applying the definition of sparse variation distance proves the lemma.
\end{proof}

\lemmaLocalInfExvDist*

\begin{proof}[Proof of Lemma \ref{lemmaLocalInfExvDist}]
  Let $n$ be the number of variables in the model.  For all
  $k \in \{0, 1, \ldots, n \}$, let $Z_k$ be a random variable that takes
  on values in the state space of $\pi$ such that, for all
  $j \in \{1, \ldots, n\}$,
  \[
    Z_{k,j}
    =
    \left\{
      \begin{array}{r l}
        X_j & \text{ if } j > k \\
        Y_j & \text{ if } j \le k
      \end{array}
    \right..
  \]
  In particular, $Z_0 = X$ and $Z_n = Y$.  Now, by the triangle inequality
  on the total variation distance,
  \begin{align*}
    \tvdist{
      \pi_i(\cdot | X)
      -
      \pi_i(\cdot | Y)
    }
    &=
    \tvdist{
      \pi_i(\cdot | Z_0)
      -
      \pi_i(\cdot | Z_n)
    } \\
    &\le
    \sum_{k=1}^n
    \tvdist{
      \pi_i(\cdot | Z_{k-1})
      -
      \pi_i(\cdot | Z_k)
    }
  \end{align*}
  Next, we note that $Z_{k-1} = Z_k$ if and only if $X_k = Y_k$.  Therefore, 
  \[
    \tvdist{
      \pi_i(\cdot | X)
      -
      \pi_i(\cdot | Y)
    }
    \le
    \sum_{k=1}^n
    \mathbf{1}_{X_k \ne Y_k}
    \tvdist{
      \pi_i(\cdot | Z_{k-1})
      -
      \pi_i(\cdot | Z_k)
    }.
  \]
  Since $Z_{k-1}$ and $Z_k$ differ only at most at index $k$, it follows
  that $(Z_{k-1}, Z_k) \in B_k$, and so,
  \[
    \tvdist{
      \pi_i(\cdot | X)
      -
      \pi_i(\cdot | Y)
    }
    \le
    \sum_{k=1}^n
    \mathbf{1}_{X_k \ne Y_k}
    \max_{(U,V) \in B_k}
    \tvdist{
      \pi_i(\cdot | U)
      -
      \pi_i(\cdot | V)
    }.
  \]
  Maximizing over the right side produces
  \[
    \tvdist{
      \pi_i(\cdot | X)
      -
      \pi_i(\cdot | Y)
    }
    \le
    \max_j
    \sum_{k=1}^n
    \mathbf{1}_{X_k \ne Y_k}
    \max_{(U,V) \in B_k}
    \tvdist{
      \pi_j(\cdot | U)
      -
      \pi_j(\cdot | V)
    }.
  \]
  Taking the expected value of both sides produces
  \begin{align*}
    \Exv{
      \tvdist{
        \pi_i(\cdot | X)
        -
        \pi_i(\cdot | Y)
      }
    }
    &\le
    \max_j
    \sum_{k=1}^n
    \Exv{
      \mathbf{1}_{X_k \ne Y_k}
    }
    \max_{(U,V) \in B_k}
    \tvdist{
      \pi_j(\cdot | U)
      -
      \pi_j(\cdot | V)
    } \\
    &=
    \max_j
    \sum_{k=1}^n
    \Prob{X_k \ne Y_k}
    \max_{(U,V) \in B_k}
    \tvdist{
      \pi_j(\cdot | U)
      -
      \pi_j(\cdot | V)
    } \\
    &\le
    \left(
      \max_k 
      \Prob{X_k \ne Y_k}
    \right)
    \left(
      \max_j
      \sum_{k=1}^n
      \max_{(U,V) \in B_k}
      \tvdist{
        \pi_j(\cdot | U)
        -
        \pi_j(\cdot | V)
      }
    \right).
  \end{align*}
  Finally, applying the definition of total influence gives us
  \[
    \Exv{
      \tvdist{
        \pi_i(\cdot | X)
        -
        \pi_i(\cdot | Y)
      }
    }
    \le
    \alpha
    \max_k 
    \Prob{X_k \ne Y_k}.
  \]
  This proves the lemma.
\end{proof}

\lemmaSequentialLocalPB*

\begin{proof}[Proof of Lemma \ref{lemmaSequentialLocalPB}]
  Define the coupling as follows.  Start in state $(X_0, Y_0)$, and at each
  timestep, choose a single variable $i$ uniformly at random for both chains to
  sample. Then, sample the selected variable in both chains using the
  optimal coupling, \crcchange{of the conditional distributions of the variable
  to be sampled in both chains,}
  guaranteed by Proposition \ref{propCoupling}.  Iterated over time, this 
  defines a full coupling of the two chains.

  Next, consider the event that $X_{i,t+1} \ne Y_{i,t+1}$.  This event will
  occur if one of two things happens: either we didn't sample variable $i$ at
  time $t$ and $X_{i,t} \ne Y_{i,t}$; or we did sample variable $i$ at time
  $t$, and the sampled variables were not equal.  Since the probability of
  sampling variable $i$ is $\frac{1}{n}$, and we know the probability that
  the sampled variables were not equal from Proposition \ref{propCoupling},
  it follows that, by the law of total probability,
  \begin{dmath*}
    \Prob{X_{i,t+1} \ne Y_{i,t+1}}
    =
    \left( 1 - \frac{1}{n} \right) \Prob{X_{i,t} \ne Y_{i,t}}
    +
    \frac{1}{n}
    \Exv{
      \tvdist{
        \pi_i(\cdot | X_t)
        -
        \pi_i(\cdot | Y_t)
      }
    },
  \end{dmath*}
  where $\pi_i(\cdot | X_t)$ denotes the conditional distribution of variable
  $i$ in $\pi$ given the values of the other variables in $X_t$.

  Next, we apply the Lemma \ref{lemmaLocalInfExvDist}, which gives us
  \begin{dmath*}
    \Prob{X_{i,t+1} \ne Y_{i,t+1}}
    \le
    \left( 1 - \frac{1}{n} \right) \Prob{X_{i,t} \ne Y_{i,t}}
    +
    \frac{\alpha}{n}
    \max_j \Prob{X_{j,t} \ne Y_{j,t}}.
  \end{dmath*}
  Maximizing both sides over $i$ produces
  \begin{dmath*}
    \max_i \Prob{X_{i,t+1} \ne Y_{i,t+1}}
    \le
    \left( 1 - \frac{1}{n} \right) \max_i \Prob{X_{i,t} \ne Y_{i,t}}
    +
    \frac{\alpha}{n}
    \max_j \Prob{X_{j,t} \ne Y_{j,t}}
    =
    \left( 1 - \frac{1}{n} + \frac{\alpha}{n} \right)
    \max_i \Prob{X_{i,t} \ne Y_{i,t}}.
  \end{dmath*}
  Applying this inequality recursively, and noting that
  $\max_i \Prob{X_{i,0} \ne Y_{i,0}} \le 1$, we get
  \[
    \max_i \Prob{X_{i,t} \ne Y_{i,t}}
    \le
    \left( 1 - \frac{1 - \alpha}{n} \right)^t
    \le
    \exp\left( - \frac{1 - \alpha}{n} t \right).
  \]
  This gives us the desired result.
\end{proof}

\lemmaHogwildLocalPB*

\begin{proof}[Proof of Lemma \ref{lemmaHogwildLocalPB}]
  Define the coupling as follows.  Start in state $(X_0, Y_0)$, and at each
  timestep, choose a single variable $i$ uniformly at random for both chains to
  sample.  Similarly, choose the \hogwild delays $\tilde \tau_{i,t}$ to also be
  the same between the two chains.  At time $t$, let $\tilde U_t$ denote the
  state that would be read by chain $X$'s sampler based on the delays, and 
  similarly let $\tilde V_t$ denote the state that would be read by chain $Y$'s
  sampler.  That is,
  \[
    \tilde U_{i,t} = X_{i,t-\tilde \tau_{i,t}},
  \]
  and similarly,
  \[
    \tilde V_{i,t} = Y_{i,t-\tilde \tau_{i,t}}.
  \]
  As in the sequential case, we sample the selected variable in both chains
  using the optimal coupling \crcchange{(of the conditional distributions of the
  variable to be sampled in both chains)} guaranteed
  by Proposition \ref{propCoupling}.
  Iterated over time, this defines a full coupling of the two chains.

  We follow the same argument as in the sequential case.  First,
  consider the event that $X_{i,t+1} \ne Y_{i,t+1}$.  This event will
  occur if one of two things happens: either we didn't sample variable $i$ at
  time $t$ and $X_{i,t} \ne Y_{i,t}$; or we did sample variable $i$ at time
  $t$, and the sampled variables were not equal.  Since the probability of
  sampling variable $i$ is $\frac{1}{n}$, and we know the probability that
  the sampled variables were not equal from Proposition \ref{propCoupling},
  it follows that, by the law of total probability,
  \begin{dmath*}
    \Prob{X_{i,t+1} \ne Y_{i,t+1}}
    =
    \left( 1 - \frac{1}{n} \right) \Prob{X_{i,t} \ne Y_{i,t}}
    +
    \frac{1}{n}
    \Exv{
      \tvdist{
        \pi_i(\cdot | \tilde U_t)
        -
        \pi_i(\cdot | \tilde V_t)
      }
    },
  \end{dmath*}
  where $\pi_i(\cdot | X_t)$ denotes the conditional distribution of variable
  $i$ in $\pi$ given the values of the other variables in $X_t$.

  Next, we apply the Lemma \ref{lemmaLocalInfExvDist}, which gives us
  \begin{dmath*}
    \Prob{X_{i,t+1} \ne Y_{i,t+1}}
    \le
    \left( 1 - \frac{1}{n} \right) \Prob{X_{i,t} \ne Y_{i,t}}
    +
    \frac{\alpha}{n}
    \max_j \Prob{U_{j,t} \ne V_{j,t}}
    =
    \left( 1 - \frac{1}{n} \right) \Prob{X_{i,t} \ne Y_{i,t}}
    +
    \frac{\alpha}{n}
    \max_j
    \sum_{k = 0}^{\infty}
    \Prob{\tilde \tau_{j,t} = k}
    \Prob{X_{j,t - k} \ne Y_{j,t - k}}.
  \end{dmath*}
  Now, if we let
  \[
    \phi_t
    =
    \max_i \Prob{X_{i,t} \ne Y_{i,t}},
  \]
  then maximizing the previous expression over $i$ implies that
  \begin{dmath*}
    \phi_{t+1}
    \le
    \left( 1 - \frac{1}{n} \right) \phi_t
    +
    \frac{\alpha}{n}
    \max_j
    \sum_{k = 0}^{\infty}
    \Prob{\tilde \tau_{j,t} = k}
    \phi_{t-k}.
  \end{dmath*}
  Now, for some constant $r \le n^{-1}$, let $y_t$ be defined to be the
  sequence
  \[
    y_t = \exp(-rt).
  \]
  Then, notice that
  \begin{dmath*}
    \left( 1 - \frac{1}{n} \right) y_t
    +
    \frac{\alpha}{n}
    \max_j
    \sum_{k = 0}^{\infty}
    \Prob{\tilde \tau_{j,t} = k}
    y_{t-k}
    =
    \left( 1 - \frac{1}{n} \right) \exp(-rt)
    +
    \frac{\alpha}{n}
    \max_j
    \sum_{k = 0}^{\infty}
    \Prob{\tilde \tau_{j,t} = k}
    \exp(-rt+rk)
    =
    \exp(-rt) \left(
      \left( 1 - \frac{1}{n} \right)
      +
      \frac{\alpha}{n}
      \max_j
      \sum_{k = 0}^{\infty}
      \Prob{\tilde \tau_{j,t} = k}
      \exp(rk)
    \right)
    =
    \exp(-rt) \left(
      \left( 1 - \frac{1}{n} \right)
      +
      \frac{\alpha}{n}
      \max_j
      \Exv{\exp(r \tilde \tau_{j,t})}
    \right).
  \end{dmath*}
  Now, by the convexity of the exponential function,
  \begin{dmath*}
    \left( 1 - \frac{1}{n} \right) y_t
    +
    \frac{\alpha}{n}
    \max_j
    \sum_{k = 0}^{\infty}
    \Prob{\tilde \tau_{j,t} = k}
    y_{t-k}
    \le
    \exp(-rt) \left(
      \left( 1 - \frac{1}{n} \right)
      +
      \frac{\alpha}{n}
      \max_j
      \left(
        1
        +
        r n
        \Exv{\exp\left( \frac{\tilde \tau_{j,t}}{n} \right) - 1}
      \right)
    \right).
  \end{dmath*}
  Applying the constraint that
  \[
    \Exv{\exp\left( \frac{\tilde \tau_{j,t}}{n} \right)}
    \le
    1 + \frac{\tau^*}{n},
  \]
  we can reduce this to
  \begin{dmath*}
    \left( 1 - \frac{1}{n} \right) y_t
    +
    \frac{\alpha}{n}
    \max_j
    \sum_{k = 0}^{\infty}
    \Prob{\tilde \tau_{j,t} = k}
    y_{t-k}
    \le
    \exp(-rt) \left(
      \left( 1 - \frac{1}{n} \right)
      +
      \frac{\alpha}{n}
      \left(
        1
        +
        r \tau^*
      \right)
    \right)
    =
    y_{t+1} \exp(r) \left(
      1 
      -
      \frac{1}{n}
      +
      \frac{\alpha}{n}
      +
      \frac{r \alpha \tau^*}{n}
    \right)
    \le
    y_{t+1} \exp(r) \exp\left(
      -
      \frac{1}{n}
      +
      \frac{\alpha}{n}
      +
      \frac{r \alpha \tau^*}{n}
    \right)
    =
    y_{t+1} \exp\left(
      \frac{n + \alpha \tau^*}{n} r
      -
      \frac{1 - \alpha}{n}
    \right).
  \end{dmath*}
  Now, we choose $r$ such that the argument to this exponential is zero; that
  is, we choose
  \[
    r
    =
    \frac{1 - \alpha}{n + \alpha \tau^*}.
  \]
  Notice that this choice satisfies the earlier assumption that
  $0 < r \le n^{-1}$.  Using this choice, we can conclude that
  \begin{dmath*}
    y_{t+1}
    \ge
    \left( 1 - \frac{1}{n} \right) y_t
    +
    \frac{\alpha}{n}
    \max_j
    \sum_{k = 0}^{\infty}
    \Prob{\tilde \tau_{j,t} = k}
    y_{t-k}.
  \end{dmath*}
  Therefore, by Lemma \ref{lemmaMSDL},
  \[
    \phi_t
    \le
    y_t
    =
    \exp\left(- \frac{1 - \alpha}{n + \alpha \tau^*} t \right).
  \]
  This proves the lemma.
\end{proof}

\lemmaHogwildLocalBias*

\begin{proof}[Proof of Lemma \ref{lemmaHogwildLocalBias}]
  Define the coupling as follows.  Start in state $(X_0, Y_0)$, and at each
  timestep, choose a single variable $\tilde I_t$ uniformly at random for both
  chains to
  sample.  Then, choose the delays $\tilde \tau_{i,t}$ for the \hogwild chain
  $X_t$.  At time $t$, let $\tilde U_t$ denote the
  state that would be read by chain $X$'s sampler based on the delays.
  That is,
  \[
    \tilde U_{i,t} = X_{i,t-\tilde \tau_{i,t}}.
  \]
  As done previously, we sample the selected variable $\tilde I_t$ in both
  chains using the optimal coupling guaranteed by Proposition
  \ref{propCoupling}.
  Iterated over time, this defines a full coupling of the two chains.

  We follow a similar argument as in the above lemmas used to bound the mixing
  time.  First,
  consider the event that $X_{i,t+1} \ne Y_{i,t+1}$.  This event will
  occur if one of two things happens: either we didn't sample variable $i$ at
  time $t$ and $X_{i,t} \ne Y_{i,t}$; or we did sample variable $i$ at time
  $t$, and the sampled variables were not equal.  Since the probability of
  sampling variable $i$ is $\frac{1}{n}$, and we know the probability that
  the sampled variables were not equal from Proposition \ref{propCoupling},
  it follows that, by the law of total probability,
  \begin{dmath*}
    \Prob{X_{i,t+1} \ne Y_{i,t+1}}
    =
    \left( 1 - \frac{1}{n} \right) \Prob{X_{i,t} \ne Y_{i,t}}
    +
    \frac{1}{n}
    \Exv{
      \tvdist{
        \pi_i(\cdot | \tilde U_t)
        -
        \pi_i(\cdot | \tilde Y_t)
      }
    },
  \end{dmath*}
  where $\pi_i(\cdot | X_t)$ denotes the conditional distribution of variable
  $i$ in $\pi$ given the values of the other variables in $X_t$.

  Next, we apply the Lemma \ref{lemmaLocalInfExvDist}, which gives us
  \begin{dmath*}
    \Prob{X_{i,t+1} \ne Y_{i,t+1}}
    \le
    \left( 1 - \frac{1}{n} \right) \Prob{X_{i,t} \ne Y_{i,t}}
    +
    \frac{\alpha}{n}
    \max_j \Prob{U_{j,t} \ne Y_{j,t}}
    =
    \left( 1 - \frac{1}{n} \right) \Prob{X_{i,t} \ne Y_{i,t}}
    +
    \frac{\alpha}{n}
    \max_j
    \sum_{k = 0}^{\infty}
    \Prob{\tilde \tau_{j,t} = k}
    \Prob{X_{j,t - k} \ne Y_{j,t}}.
  \end{dmath*}
  In order to evaluate this, we notice that the event $X_{j,t - k} \ne Y_{j,t}$
  can happen only if either $X_{j,t} \ne Y_{j,t}$ or at some time $s$, where
  $t - k \le s < t$, we sampled variable $j$ (that is, $\tilde I_s = j$). 
  Therefore, by the union bound,
  \[
    \Prob{X_{j,t - k} \ne Y_{j,t}}
    \le
    \Prob{X_{j,t} \ne Y_{j,t}}
    +
    \sum_{s = t - k}^{t - 1} \Prob{\tilde I_s = j}.
  \]
  Since the probability of sampling variable $j$ at any time is always just
  $\frac{1}{n}$, we can reduce this to
  \[
    \Prob{X_{j,t - k} \ne Y_{j,t}}
    \le
    \Prob{X_{j,t} \ne Y_{j,t}}
    +
    \frac{k}{n}.
  \]
  Substituting this into our previous expression produces
  \begin{dmath*}
    \Prob{X_{i,t+1} \ne Y_{i,t+1}}
    \le
    \left( 1 - \frac{1}{n} \right) \Prob{X_{i,t} \ne Y_{i,t}}
    +
    \frac{\alpha}{n}
    \max_j
    \sum_{k = 0}^{\infty}
    \Prob{\tilde \tau_{j,t} = k} \left(
      \Prob{X_{j,t} \ne Y_{j,t}}
      +
      \frac{k}{n}
    \right)
    =
    \left( 1 - \frac{1 - \alpha}{n} \right) \Prob{X_{i,t} \ne Y_{i,t}}
    +
    \frac{\alpha}{n^2}
    \max_j
    \Exv{\tilde \tau_{j,t}}
    \le
    \left( 1 - \frac{1 - \alpha}{n} \right) \Prob{X_{i,t} \ne Y_{i,t}}
    +
    \frac{\alpha \tau}{n^2}.
  \end{dmath*}
  Now, if we let
  \[
    \phi_t
    =
    \max_i \Prob{X_{i,t} \ne Y_{i,t}},
  \]
  then maximizing the previous expression over $i$ implies that
  \[
    \phi_{t+1}
    \le
    \left( 1 - \frac{1 - \alpha}{n} \right) \phi_t
    +
    \frac{\alpha \tau}{n^2}.
  \]
  Subtracting from both sides to identify the fixed point gives us
  \begin{dmath*}
    \phi_{t+1}
    -
    \frac{\alpha \tau}{(1 - \alpha) n}
    \le
    \left( 1 - \frac{1 - \alpha}{n} \right) \phi_t
    +
    \frac{\alpha \tau}{n^2}
    - 
    \frac{\alpha \tau}{(1 - \alpha) n}
    =
    \left( 1 - \frac{1 - \alpha}{n} \right) \left(
      \phi_t
      -
      \frac{\alpha \tau}{(1 - \alpha) n}
    \right).
  \end{dmath*}
  Applying this inequality recursively lets us conclude that
  \begin{dmath*}
    \phi_t
    -
    \frac{\alpha \tau}{(1 - \alpha) n}
    \le
    \left( 1 - \frac{1 - \alpha}{n} \right)^t \left(
      \phi_0
      -
      \frac{\alpha \tau}{(1 - \alpha) n}
    \right)
    \le
    \exp\left( - \frac{1 - \alpha}{n} t \right),
  \end{dmath*}
  and so,
  \[
    \phi_t
    \le
    \exp\left( - \frac{1 - \alpha}{n} t \right)
    +
    \frac{\alpha \tau}{(1 - \alpha) n}.
  \]
  This is the desired expression.
\end{proof}

\lemmaMSDL*

\begin{proof}[Proof of Lemma \ref{lemmaMSDL}]
  We will approach this by induction.  The base case holds by assumption, since
  $x_0 = y_0$.  For the inductive case, if $x_t \le y_t$ for all $t \le T$, then
  \[
    x_{T+1}
    \le
    f_T(x_T, x_{T-1}, \ldots, x_0).
  \]
  By monotonicity and the inductive hypothesis,
  \[
    x_{T+1}
    \le
    f_T(y_T, y_{T-1}, \ldots, y_0),
  \]
  and therefore,
  \[
    x_{T+1} \le y_{T+1}.
  \]
  Applying induction to this proves the lemma.
\end{proof}

\lemmaModelX*

\begin{proof}[Proof of Lemma \ref{lemmaModelX}]
  \crcchange{(This lemma contains much of the technical work needed to prove
  Statement \ref{stmtBadMixExample}.  A higher-level motivation for why
  we are proving this lemma is furnished in the proof of that result.)}

  Assume that, as we run the chain described in this lemma, we also assign
  a ``color'' to each of the variables.  All variables with an initial value
  of $1$ start out as black, and all other variables start out as white.  Let
  $B_t$ denote the set of variables that are colored black at any time $t$, and
  let $S_t$ denote the sum of all variables that are colored black at that
  time.  We re-color variables according to the following procedure:
  \begin{enumerate}
    \item Whenever we change a variable's value from $-1$ to $1$, if it is
      colored white, color it black.
    \item Whenever we change a variable's value from $-1$ to $1$, if it is
      already colored black, choose a random variable that had value $-1$ at
      time $t$, and if it is white, color it black.
  \end{enumerate}
  Note that as a consequence of this result, a variable that is colored
  white always has value $-1$.

  We will prove the following sub-result by induction on $t$: given a time $t$,
  set $B_t$, and sum $S_t$, the values of the variables in $B_t$ are uniformly
  distributed over the set of possible assignments that are consistent with
  $S_t$.

  (Base Case.)  The base case is straightforward.  Since $B_0$ is just the set
  of variables that have value $1$, there is only one possible assignment
  that is consistent with $S_0$: the assignment in which all variables take
  on the value $1$.  Since this assignment actually occurs with probability
  $1$, the statement holds.

  (Inductive Case.)  Assume that the sub-result is true at time $t$.  The
  sampler chooses a new variable $i$ to sample.  One of the following things
  will happen:
  \begin{itemize}
    \item We don't re-color any variables, or change the values of any variables
      in $B_t$.  In this case, $B_{t+1} = B_t$ and $S_{t+1} = S_t$.  Since
      there is no change to $B$ or $S$, all consistent
      assignments of the black variables are still equiprobable.
    \item We don't re-color any variables, but we do change the value of some
      variable in $B_t$ (by changing its value from $1$ to $-1$).  Since
      we sampled the variable $i$ at random, all consistent assignments of
      the black variables will remain equiprobable.
    \item We re-color some variable $j$ black.  There are two events that can
      cause this:
      \begin{itemize}
        \item We could have sampled variable $j$ (that is $i = j$), and changed
          its value from $-1$ to $1$.  This will happen with probability
          \[
            \frac{1}{N} \cdot \frac{1}{2} = \frac{1}{2 N}
          \]
        \item We could have sampled a variable $i \ne j$ that is already colored
          black, changed its value from $-1$ to $1$, and then chosen variable
          $j$ at random to color black.  Since, at time $t$, the number of
          variables with value $-1$ must be
          \[
            \frac{N + 1}{2},
          \]
          (since we are about to change a value from $-1$ to $1$),
          this will happen with probability
          \[
            \frac{u}{N} \cdot \frac{1}{2} \cdot \frac{2}{N + 1}
            =
            \frac{u}{N (N + 1)}
          \]
          where $u$ is the number of black-colored variables that have value
          $-1$ at time $t$.
      \end{itemize}
      From this analysis, it follows that, given that we re-colored some
      variable $j$ black, it will have value $-1$ with probability
      \[
        \Prob{\text{variable $j$ has value $-1$}}
        =
        \frac{\frac{u}{N (N + 1)}}{\frac{1}{2 N} + \frac{u}{N (N + 1)}}
        =
        \frac{u}{u + \frac{N + 1}{2}}.
      \]
      In particular, at time $t$, the number of variables that are in $B_t$ is
      \[
        \frac{N - 1}{2} + u,
      \]
      since all variables with value $1$ are in $B_t$, and $B_t$ is stipulated
      to contain $u$ additional variables with value $-1$.  It follows that
      at time $t+1$, the number of variables that are in $B_t$ is
      \[
        \frac{N + 1}{2} + u,
      \]
      and there will still be $u$ variables in $B_{t+1}$ with value $-1$.
      Therefore, the fraction of variables in $B_{t+1}$ that have value $-1$
      will be
      \[
        \frac{u}{u + \frac{N + 1}{2}}.
      \]
      Note that this is exactly equal to the probability that variable $j$
      will have value $-1$.  Combining this with the inductive hypothesis
      shows that the consistent states will all remain equiprobable in this
      case.
  \end{itemize}
  Since the consistent states remain equiprobable in all of the possible cases,
  it follows from the law of total probability that the consistent states are
  equiprobable in all cases.  This shows that the sub-result holds in the
  inductive case.

  We have now showed that given a time $t$,
  set $B_t$, and sum $S_t$, the values of the variables in $B_t$ are uniformly
  distributed over the set of possible assignments that are consistent with
  $S_t$.  This implies that if $T_1$ is the first time at which the set $B_t$
  contains all
  variables, the value of $X_T$ is are uniformly distributed
  over all possible states with $\mathbf{1}^T X = 1$.

  Now, we performed this construction for a particular polarity of swaps
  (i.e. focusing on switches from $-1$ to $1$), but by symmetry we could
  just as easily have used the same construction with the signs of all the
  variables reversed.  If we let $T_{-1}$ be the first time at which the
  set $B_t$ contains all variables using this reverse-polarity construction,
  then the value of $X_T$ is uniformly distributed
  over all possible states with $\mathbf{1}^T X = -1$.

  Let $T^*$ be a random variable that is $T_1$ with probability $\frac{1}{2}$
  and $T_{-1}$ with probability $\frac{1}{2}$.  It follows that at time $T^*$,
  the distribution of $X_{T^*}$ will be $\pi$.  Therefore, $T^*$ is a strong
  stationary time for this chain.  By the properties of strong stationary
  times,
  \[
    t_{\mathrm{mix}} \le 4 \Exv{T^*}.
  \]

  To bound the mixing time, we start by noticing that
  \[
    \Exv{T^*}
    =
    \frac{1}{2} \Exv{T_1} + \frac{1}{2} \Exv{T_{-1}}
    =
    \Exv{T_1}.
  \]
  If we let $\bar T$ be the first time at which each variable has been
  set to $1$ at least once, then
  \[
    T_1 \le \bar T.
  \]
  Now, if we sample a variable, the probability that we will set it to $1$
  is (roughly) $\frac{1}{4}$.  It follows from the coupon collector's problem
  bound that the expected amount of time required to set all variables to $1$
  at least once is
  \[
    \Exv{\bar T} = O(n \log n).
  \]
  Combining this with the previous inequalities lets us conclude that
  \[
    t_{\mathrm{mix}} = O(n \log n),
  \]
  which proves the lemma.
\end{proof}

\bibliographysec{references}
\bibliographystylesec{icml2015}

}

\end{document}